\documentclass[lettersize,journal]{IEEEtran}
\hyphenation{op-tical net-works semi-conduc-tor IEEE-Xplore}

\usepackage{amsthm}
\newtheorem{defn}{\textbf{Definition}}
\newtheorem*{prob_state}{\textbf{Problem Statement}}

\newtheorem{proposition}{\textbf{Proposition}}

\usepackage{color}
\usepackage{graphicx}
\usepackage{subfig}
\captionsetup[subfloat]{listofformat=parens}
\usepackage{amsmath}
\usepackage{amssymb}
\usepackage{mathrsfs}
\usepackage[ruled, vlined, linesnumbered]{algorithm2e}
\usepackage[hidelinks]{hyperref}
\usepackage{cite}
\usepackage{makecell}
\usepackage[numbers]{natbib}
\usepackage{array}
\usepackage{tikz}
\usetikzlibrary{trees}
\usepackage{tabularx,colortbl}
\usepackage{threeparttable}
\usepackage{booktabs}
\usepackage{multirow}
\usepackage{longtable}
\usepackage{rotating}
\usepackage[capitalize,noabbrev]{cleveref}
\crefname{equation}{Eq.}{Eqs.}
\usepackage{ragged2e}
\usepackage{braket}

\usepackage{enumitem}
\usepackage{tcolorbox}
\usepackage{mdframed}
\usepackage{fancybox}

\usepackage{floatrow}
\usepackage{diagbox}

\usepackage{adjustbox}

\newcommand{\filledcircle}{\tikz\fill[black] (0,0) circle (.8ex);}
\newcommand{\emptycircle}{\tikz\draw (0,0) circle (.8ex);}

\definecolor{DeepPink}{HTML}{FF1493}
\definecolor{Orchid}{HTML}{DA70D6}
\definecolor{Magenta}{HTML}{FF00FF}
\definecolor{Fuchsia}{HTML}{FF00FF}
\definecolor{LavenderPink}{HTML}{FFB6C1}
\definecolor{verylightgray}{rgb}{0.9, 0.9, 0.9}
\definecolor{lightred}{rgb}{1,0.8,0.8}

\begin{document}
%

\title{SMS: Self-supervised Model Seeding for Verification of Machine Unlearning}

\author{Weiqi~Wang,~\IEEEmembership{Member,~IEEE},
	Chenhan~Zhang,~\IEEEmembership{Member,~IEEE},
        Zhiyi~Tian,~\IEEEmembership{Member,~IEEE}, \\
        and Shui~Yu,~\IEEEmembership{Fellow,~IEEE}
\IEEEcompsocitemizethanks{\IEEEcompsocthanksitem W. Wang, C. Zhang, Z. Tian and S. Yu are with the School of Computer Science, University of Technology Sydney, Australia.\protect\\
E-mail: {\{weiqi.wang, chenhan.zhang, zhiyi.tian-1, shui.yu\}@uts.edu.au}
}
\noindent
\thanks{\it{ This work is partially supported by Australia ARC LP220100453 and ARC DP240100955. (Corresponding author: Zhiyi Tian.)}}
}

%
%

\markboth{This paper has been accepted by IEEE Transactions on Dependable and Secure Computing}%
{Shell \MakeLowercase{\textit{et al.}}: Bare Advanced Demo of IEEEtran.cls for IEEE Computer Society Journals}
%

\IEEEtitleabstractindextext{%
\begin{abstract}	
\justifying 
Many machine unlearning methods have been proposed recently to uphold users' right to be forgotten. However, offering users verification of their data removal post-unlearning is an important yet under-explored problem. Current verifications typically rely on backdooring, i.e., adding backdoored samples to influence model performance. Nevertheless, the backdoor methods can merely establish a connection between backdoored samples and models but fail to connect the backdoor with genuine samples. Thus, the backdoor removal can only confirm the unlearning of backdoored samples, not users' genuine samples, as genuine samples are independent of backdoored ones. In this paper, we propose a Self-supervised Model Seeding (SMS) scheme to provide unlearning verification for genuine samples. Unlike backdooring, SMS links user-specific seeds (such as users' unique indices), original samples, and models, thereby facilitating the verification of unlearning genuine samples. However, implementing SMS for unlearning verification presents two significant challenges. First, embedding the seeds into the service model while keeping them secret from the server requires a sophisticated approach. We address this by employing a self-supervised model seeding task, which learns the entire sample, including the seeds, into the model's latent space. Second, maintaining the utility of the original service model while ensuring the seeding effect requires a delicate balance. We design a joint-training structure that optimizes both the self-supervised model seeding task and the primary service task simultaneously on the model, thereby maintaining model utility while achieving effective model seeding. The effectiveness of the proposed SMS scheme is evaluated through extensive experiments on three representative datasets, utilizing various model architectures and exact and approximate unlearning benchmarks. The results demonstrate that SMS provides effective verification for genuine sample unlearning, effectively addressing the limitations of existing solutions.

\end{abstract}

	\begin{IEEEkeywords}
		Machine unlearning, unlearning verification, self-supervised model seeding, joint optimization, backdooring.
\end{IEEEkeywords}
}

\maketitle

\IEEEdisplaynontitleabstractindextext

%
\IEEEpeerreviewmaketitle

\ifCLASSOPTIONcompsoc
\IEEEraisesectionheading{\section{Introduction}\label{sec:introduction1}}
\else
\section{Introduction}
\label{sec:introduction}
\fi


\IEEEPARstart{I}{n} recent years, numerous privacy regulations and laws, such as the General Data Protection Regulation (GDPR) and California Consumer Privacy Act (CCPA) \cite{mantelero2013eu}, have been introduced to safeguard individuals' data privacy. These legislations guarantee individuals the right to be forgotten, thus prompting a hot and attractive research topic, machine unlearning \cite{bourtoule2021machine,cao2015towards,sekhari2021remember}. Machine unlearning aims to remove the trace of user-specified samples from the already-trained models, ensuring compliance with these privacy mandates. Unfortunately, most machine unlearning techniques have overlooked providing data owners with verification to prove that their specified samples have been removed.

Enabling data owners to verify whether a ML server has unlearned their data is an important yet challenging responsibility for machine unlearning services. Existing machine unlearning approaches primarily focus on removing the influence of specified training samples from the trained model \cite{bourtoule2021machine, wu2020deltagrad, nguyen2020variational}. In these unlearning methods, the ML server usually directly executes an unlearning algorithm and releases a service interface of the model to the public. However, users often struggle to determine if their data has been unlearned, especially when they only have access to the black-box model provided by these services \cite{thudi2022necessity,ranganath2014black,zhao2021causal}. In this situation, providing an efficient unlearning verification method is essential for users to safeguard their right to be forgotten and for the ML server to build a transparent privacy-protecting environment.

\noindent
\textbf{Research Gap.}
There are a few works providing data removal verification by utilizing backdoor methods. These methods involve mixing backdoored samples with users' data for model training. After executing the unlearning algorithm, the effectiveness of backdoor attacks is used to determine whether the user's data has been unlearned~\cite{hu2022membership,sommer2022athena,guo2023verifying}. 

{However, the removal of backdoors cannot present the removal of genuine samples. It is because the backdoored samples and genuine samples are distinct datasets, exhibiting markedly different behaviors during the model unlearning processes~\cite{gao2023backdoor,zeng2023narcissus,nguyen2020input}.} The backdooring methods merely link the backdoored samples and the models. Particularly in approximate unlearning \cite{nguyen2020variational,fu2022knowledge,nguyen2022markov}, the model accuracy on backdoored samples decreases significantly faster than on genuine samples. Even when the backdoor attack success rate drops to zero, it remains challenging to ensure that the genuine samples have been unlearned. 

{Moreover, to make backdoor-based unlearning verification effective, their application is restricted exclusively to validating retraining-based (exact) unlearning methods~\cite{cao2015towards,bourtoule2021machine,yanarcane2022unlearning,wu2020deltagrad} and is suitable only for huge samples' unlearning verification as it needs to add sufficient backdoored samples~\cite{hu2022membership}.} These verification methods are not suitable for common genuine data and approximate unlearning scenarios. The difference from existing unlearning verification studies is shown in \Cref{overview_of_auditing_method}.

\begin{figure}[t]
	\centering
	\includegraphics[width=0.99\linewidth]{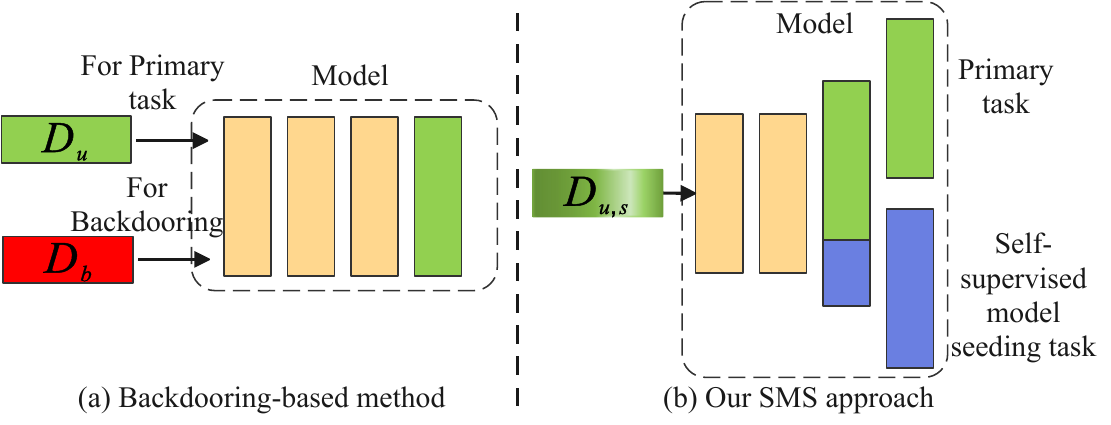}
	\caption{In backdooring-based methods, the backdoored data $D_b$ is mixed with the user's data $D_u$, serving distinct purposes: $D_u$ for the primary task and $D_b$ for backdooring. Hence, the removal of the backdoor can only verify the usage of the backdoored data, while it cannot confirm whether the users' genuine data has been unlearned. {By contrast, our method integrates seeds into the user's genuine data, resulting in $D_{u,s}$. We will not change data labels to link and highlight the seeds as backdooring. Hence, the seeds serve as normal features of $D_{u,s}$. We designed a joint training structure to learn the seeds and primary tasks simultaneously, which links the seeds, original samples, and models for unlearning verification. }
	}
	\label{fig_introfigure}
\end{figure}


\noindent
\textbf{Our Work.}
In this paper, we propose a Self-supervised Model Seeding scheme (SMS) designed for verification of machine unlearning, suitable for genuine samples. We present an overview of the SMS structure and a comparison with the backdoor-based methods~\cite{hu2022membership,sommer2022athena} in \Cref{fig_introfigure}. SMS effectively verifies the unlearning of genuine samples as it establishes the link between seeds, original samples, and models. Our SMS approach just embeds seeds into users' data without altering the labels, thus cutting off the reliance on additional samples, which is different from backdoor-based methods. However, implementing SMS for unlearning verification still faces two significant challenges. First, to enable the model effectively learns the seed information from the seed-embedded data, we design a self-supervised model seeding task that embeds the entire data (including the seed information) into the learned latent space. Second, to mitigate the negative impact of the model seeding task on the primary task, we propose a joint optimization structure that captures seed-related information while preserving the models' utility on the primary task. After the model seeding training is finished, the users can train a verifier to identify if the model outputs contain the seed information, enabling users to verify unlearning operations.

We conducted extensive experiments to evaluate the SMS scheme for unlearning verification across three representative datasets, various model architectures, and mainstream exact and approximate unlearning benchmarks. The results demonstrate the superiority of our approach in supporting genuine data verification at both the sample and user levels, in both exact and approximate unlearning scenarios, compared to the state-of-the-art methods. Furthermore, SMS effectively mitigates model utility degradation, outperforming current backdoor-based methods.


Our main contribution is summarized as follows:
\begin{itemize}[itemsep=0pt, parsep=0pt, leftmargin=*]
	\item We recognize the critical need for unlearning verification and formalize the machine unlearning verification problem. This formalization elucidates the requirements for data removal verification and informs the design of effective unlearning verification strategies.
	\item We propose a SMS scheme to empower data owners to verify whether the server unlearns their data. In this approach, the seeds are integrated into the data as inherent features. The server executes the SMS training using the designed joint-optimization structure without requiring any additional samples. 
	\item We conducted extensive experiments on both exact and approximate unlearning benchmarks across three representative datasets and various model architectures. The results demonstrate the superiority of SMS over existing solutions, supporting a broader range of unlearning scenarios, especially approximate unlearning scenarios, where the backdoor-based methods are infeasible.   
	\item The source code and artifacts of the SMS scheme are released at \url{https://github.com/wwq5-code/SMS}, which provides an effective model seeding tool for machine unlearning verification, achieving significant improvements over existing verification methods and shedding light on the design of future unlearning verification strategies. 
\end{itemize}

%

\noindent  \textbf{Roadmap.} 
The rest of the paper is structured as follows: Section \ref{rw} reviews related work and some background knowledge. We introduce a detailed problem statement of data usage verification for machine unlearning in Section~\ref{ps}. The detailed implementation of our proposed SMS is presented in Section \ref{vmu_method}. Section \ref{ex} showcases our experimental results and comparison with related works. Finally, in Section \ref{summary}, we summarize the paper and discuss the future work.

\begin{table}[t]
	\scriptsize
	\caption{An overview of unlearning verification methods.}
	\label{overview_of_auditing_method}
	\resizebox{\linewidth}{!}{
		\setlength\tabcolsep{3.pt}
		\begin{tabular}{c|cccc}
			\toprule[1pt]
			\multirow{2}{*} { \makecell[c]{\textbf{Unlearning} \\ \textbf{Verification} \\ \textbf{Methods}} } & \multicolumn{2}{c} { \textbf{Verifying Data Type}} & \multicolumn{2}{c} {\textbf{Unlearning Methods}} \\
			\cmidrule(r){2-3}   \cmidrule(r){4-5} 
			& \makecell[c]{{Backdoored} \\ {samples	}  }    & \makecell[c]{{Genuine} \\ {samples	}  }   & \makecell[c]{{Exact} \\ {unlearning}  }    &\makecell[c]{{Approximate} \\ {unlearning}  }  \\ 
			\midrule
			MIB~\cite{hu2022membership} & \filledcircle&\emptycircle	&\filledcircle & \emptycircle   \\
			Athena~\cite{sommer2022athena} & \filledcircle  &\emptycircle 	&\filledcircle &\emptycircle    \\
			Verifying in the dark~\cite{guo2023verifying} & \filledcircle  &\emptycircle  &\filledcircle & \emptycircle  \\
			Verifi~\cite{gao2024verifi} & \filledcircle  &\emptycircle  	&\filledcircle & \emptycircle   \\
			SMS (Ours)	     & \emptycircle  & \filledcircle  	&\filledcircle  &\filledcircle   \\
			\bottomrule[1pt]
	\end{tabular}}
	\begin{tabbing}
		\filledcircle: the verifying method is applicable; 
		\emptycircle: the verifying method is not applicable.
	\end{tabbing}
	\vspace{-2mm}
\end{table}

\section{Related Work} \label{rw}

\subsection{Machine Unlearning and Verification of Unlearning}


Unlearning methods can be broadly categorized into two types: exact unlearning and approximate unlearning. Exact unlearning methods extend from naive retraining, aiming to reduce the computational burden of training a new model by redesigning learning algorithms and storing intermediate parameters during the learning process~\cite{cao2015towards,bourtoule2021machine,yanarcane2022unlearning,wu2020deltagrad}. Although exact unlearning methods achieve effective unlearning, they still incur substantial computational costs, rendering them impractical in scenarios where unlearning requests are frequent \cite{Hu2024sp,hu2024eraser}. Conversely, approximate unlearning methods seek to adjust the model parameters based on the original model, with the objective of approximating a model as if it had been retrained on the remaining dataset~\cite{nguyen2020variational,fu2022knowledge,nguyen2022markov,zhang2024forgetting}. While approximate unlearning methods can efficiently implement unlearning, they compromise unlearning effectiveness and model utility to some extent.

Since unlearning requests frequently involve user privacy concerns, it is crucial to provide verification mechanisms to users, demonstrating that their data has been successfully removed from the trained model~\cite{thudi2022necessity,sommer2020towards}. {Membership inference attack (MIA) is commonly used as an evaluation metric for the model server to evaluate the unlearning algorithms \citep{zhao2024makes,kurmanji2023towards}. However, MIA usually can only provide a rough prediction as the training set and other test sets usually have similar samples or features. Moreover, conducting MIA from the user's side is complex as it requires users having the adversary's capability~\citep{wang2025crfu,chen2021machine} to mimic the training model and prepare the probe data.}


Backdoor-based verification methods~\cite{gao2024verifi,hu2022membership,guo2023verifying,sommer2022athena} are specifically suited for retraining-based unlearning approaches, also known as exact unlearning~\cite{cao2015towards,bourtoule2021machine,yanarcane2022unlearning}. These methods exploit the intrinsic property of exact unlearning, which involves the concurrent deletion of both backdoored and erased samples at the dataset level, thereby ensuring the model is retrained exclusively on the residual dataset~\cite{cao2015towards,bourtoule2021machine}. 

However, for approximate unlearning methods, which lack the guarantee that the model is retrained based on the remaining dataset excluding the backdoored and erased samples \cite{zhang2024forgetting,warnecke2024machine}, these backdoor-based verification methods can only ascertain the inclusion or exclusion of backdoored samples rather than genuine samples. The distinction between the backdoored dataset and the genuine dataset in approximate unlearning scenarios will be further elaborated in \Cref{unlearning_opt_process} in \Cref{verify_of_diff_unl}. Thus, there is an imperative need to develop a practical and verifiable scheme for machine unlearning that accommodates these typical unlearning requests.


\subsection{Difference form Existing Studies}

Our SMS approach diverges significantly from existing backdoor-based unlearning verification methods~\cite{gao2024verifi,hu2022membership,guo2023verifying,sommer2022athena} concerning the type of verifying data and the applicable unlearning methods, as depicted in \Cref{overview_of_auditing_method}. 

Firstly, backdoor-based verification methods necessitate the integration of backdoored samples into users' datasets. However, backdoored and genuine samples constitute distinct datasets, and the elimination of the backdoor merely signifies the unlearning of backdoored samples, not genuine ones. Conversely, our method embeds seeds into genuine samples without changing the labels. Our method builds links between seeds, genuine samples and models using designed model seeding methods, thus ensuring the verification of genuine samples. 

Secondly, backdoor-based methods are restricted to exact unlearning methods due to their dependence on dataset-level data removal characteristics. Backdooring techniques can only establish the connection between backdoored samples and models. These backdoor-based verification methods can only be employed for unlearning verification when exact unlearning ensures the simultaneous deletion of both backdoored and erased genuine samples at the dataset level. Our method, however, directly integrates seeds into genuine samples without requiring any additional samples and is applicable to both exact and approximate unlearning methods, providing a more versatile and comprehensive verification solution.

\section{Scenario and Problem Statement} \label{ps}



\subsection{The MLaaS Scenario}
To facilitate understanding, we introduce our problem within a simple Machine Learning as a Service (MLaaS) scenario. In this scenario, there are two primary entities: an ML server that collects data from users and trains ML service models, and users (data owners) who contribute their data for ML model training.

To foster a healthy privacy-protection environment and support the execution of the right to be forgotten, the ML service server is responsible for executing unlearning. However, it is challenging to achieve sufficient evidence for users to verify if users' data are unlearned to prevent the spoof unlearning from the ML server. Similar to the most common unlearning verification settings \cite{hu2022membership,guo2023verifying}, we assume the ML server is honest for learning training but may spoof users for unlearning, i.e., while it honestly hosts learning processes, it may cheat during unlearning operations by pretending that unlearning has been executed when it has not. {It is reasonable for the ML server to pretend to execute unlearning operations because pretending to unlearn while not executing can avoid the degradation of model utility.}

{Specifically, for the ML server in this paper, it knows the total training dataset $D$, the service model $M$, and learning and unlearning algorithms. The restriction is that the server does not know the seed information of users, necessitating the design of general model seeding techniques for unlearning verification without revealing the seed to the server. For the unlearning users, they only have access to their unlearning data, the seed information, and the black-box access to the model, i.e., querying the model with inputs and getting corresponding outputs. The unlearning users cannot access the model parameters and the remaining dataset.}



\subsection{Unlearning Verification Problem}
Based on the above scenario, we first introduce the main process of unlearning to facilitate understanding of the unlearning verification problem.

\vspace{1mm}
\noindent
\textbf{Machine Unlearning.} The unlearning process usually includes the following steps. (1) The ML server trained a model $M$ based on the collected dataset $D$. (2) The user uploads the unlearning requested erasure dataset $D_e$ to the server for unlearning. (3) The server conducts an unlearning algorithm $\mathcal{U}$ to remove $D_e$'s contribution from $M$.


The unlearning verification problem aims to confirm whether previously used data in step (1) has been removed from the model following the execution of unlearning algorithms after step (3). Specifically, a data owner's samples are initially employed to train the model, and upon submitting an unlearning request and completing the unlearning operation, it should be verified that the specified samples have been removed from the model.


\begin{prob_state}[Verification of Machine Unlearning]
	Assume there is a model $M$ trained with users' data. When the user requests to unlearn the specified data $D_{e}$, the previously used data $D_e$ should disappear in the model after executing the unlearning algorithm $\mathcal{U}$. It requires us to have a verifying method $\mathcal{V}$, which can identify the data usage before unlearning, $\mathcal{V}(M, D_e)=1$, and verify data removal after unlearning, $\mathcal{V}(\mathcal{U}(M), D_e)=0$. Meanwhile, the verification scheme should not compromise the model utility of the original ML service.
\end{prob_state}
If $D_e$ only contains a few samples of the user, we can also call it a \textit{sample-level} unlearning verification problem. Directly solving this problem is not easy. Hence, current solutions are based on backdooring techniques \cite{guo2023verifying, hu2022membership}. However, these methods are infeasible for verifying unlearning in approximate unlearning scenarios and for genuine samples. We propose the model seeding strategy to solve the unlearning verification problem in this paper.


\subsection{Defining the Requirements of Model Seeding Scheme for Unlearning Verification} \label{defining_requirements}

We now define the requirements for an effective unlearning verification solution, focusing on verifiability, unambiguity, and functionality preservation. 

\subsubsection{Verifiability}

To verify whether data has been unlearned from models, the model seeding scheme must include two embedding steps: integrating seeds into the data to mark it and embedding the marked data into the model for subsequent unlearning verification. The process can be summarized as follows. First, the data owner generates a seed for the sample $x_i$ using a generation module $\texttt{Gen}$ with security parameter $N$:
\[
s_i \gets \texttt{Gen}(1^N).
\]
Then, the data owner embeds the seed $s_i$ into the sample $x_i$. The server subsequently embeds the seed into the model using the seed-embedded sample $x_{i,s_i}$. The data seeding and model seeding can be described as follows.
\begin{align*}
	& \textbf{Data Seeding: }  \hspace{17mm}	x_{i,s_i}   \gets \texttt{E}_1(x_i, s_i),  \\
	& \textbf{Model Seeding: } \hspace{8mm} (M_{\texttt{S}}, x_{i,s_i}, \mathcal{V})  \gets \texttt{E}_{2}(M, \texttt{E}_{1}(x_i, s_i)), 
\end{align*}
where $\texttt{E}_1$ is the data seeding module that integrates the seed into the data, and $\texttt{E}_2$ is the model seeding module that embeds the seed of seed-embedded data into the model. Here, $x_{i,s_i}$ represents the seed-embedded data, $M_{\texttt{S}}$ denotes the seed-embedded model, and $\mathcal{V}$ is the verification model used to identify the seeds.

An effective unlearning verification scheme must have verifiability, ensuring that when the verifying function $\mathcal{V}$ checks the model $M_{\texttt{S}}$ with the correct seed input $x_{i,s_i}$, the probability of correctly verifying the presence of the seed is high. Formally, this can be defined as: 
\begin{defn}[Verifiability] 
Verifiability is achieved if, for any given seed $s_i$ generated with the security parameter $N$ by $\texttt{Gen}$, the probability that the verifying function $\mathcal{V}$ correctly asserts the presence of the seed in $M_{\texttt{S}}$ using $x_{i,s_i}$ is at least $1 - \epsilon^{-N}$. Mathematically, this is expressed as: 
	{ 
		\begin{align}
			&\Pr \{\mathcal{V}(M_{\texttt{S}}, x_{i,s_i})=1\} \geq 1 - \epsilon^{-N}. \label{veri} 
		\end{align}
	}
\end{defn}
In \Cref{veri}, $\epsilon > 1$ and $\epsilon^{-N}$ declines exponentially as $N$ increases. A higher $N$ (the security parameter) results in a smaller $\epsilon^{-N}$, thereby enhancing verifiability. However, as the security parameter increases, the seed becomes more noticeable. For example, an MNIST data sample contains $28 \times 28$ pixels. If the seeding strategy involves setting $N$ pixels to 1, a larger $N$ will result in more pixels being set to 1, making the seed more apparent. Consequently, verifiability is higher because a larger and more visible seed is easier to identify. The trade-off is that the seed's ability to remain concealed is diminished.

Assuming the scheme possesses an excellent verification mechanism to check data usage in $M_{\texttt{S}}$, the effectiveness of unlearning can be represented by the disappearance of the seed check after unlearning. This is characterized by a high $\Pr \{\mathcal{V}( \mathcal{U} (M_{\texttt{S}}), x_{i,s_i})=0\}$ or a low $\Pr \{\mathcal{V}( \mathcal{U} (M_{\texttt{S}}) , x_{i,s_i})=1\} $.


\subsubsection{Unambiguity}

Unambiguity ensures that when the verifying function $\mathcal{V}$ checks the model $M_{\texttt{S}}$ with an incorrect seed input $x_{i,s_i'}$, the probability of mistakenly verifying the presence of the seed $s_i$ should be very low. Formally, this can be defined as: 
\begin{defn}[Unambiguity]
Unambiguity is achieved if, for any non-original seed $s_i'$ (different from $s_i$), the probability that the verifying function $\mathcal{V}$ for $s_i$ correctly denies the presence of $s_i'$ in $M_{\texttt{S}}$ using $x_{i,s_i'}$ is at least $1 - \epsilon^{-N}$, where $\epsilon^{-N}$ is a small positive function that decreases exponentially as $N$ increases. Mathematically, this is expressed as: 
	{ \small
		\begin{align}
			\Pr \{\mathcal{V}(M_{\texttt{S}}, x_{i,s'_i})=0\} \geq 1 -  \epsilon^{-N}. \label{unambi}
		\end{align}
	}
\end{defn}
This definition implies that the model seeding scheme is highly reliable in both confirming the presence of the correct seed and avoiding the misidentification of incorrect ones. Consequently, it can effectively defend against the ambiguity attacks like ~\cite{fan2019rethinking,ong2021protecting}.

To claim the usage of data $x_{i,s_i}$ for model $M_{\texttt{S}}$ training with a verifier $\mathcal{V}$, both verifiability (\Cref{veri}) and unambiguity (\Cref{unambi}) are required. Verifiability ensures that $\mathcal{V}$ can accurately decode and verify the embedded seed $s_i$. Unambiguity ensures that the seeding scheme is resistant to ambiguity attacks~\cite{hua2023unambiguous}. An ambiguity attack involves using a forged seed $s_i'$ to cheat the $\mathcal{V}$ to achieve the ownership verification \cite{fan2019rethinking,ong2021protecting,hua2023unambiguous}. Unambiguity requires that $\mathcal{V}$ does not misidentify other seeds as the genuine seed. 


 \begin{figure*}[t]
 	\centering
 	\includegraphics[width=0.96\linewidth]{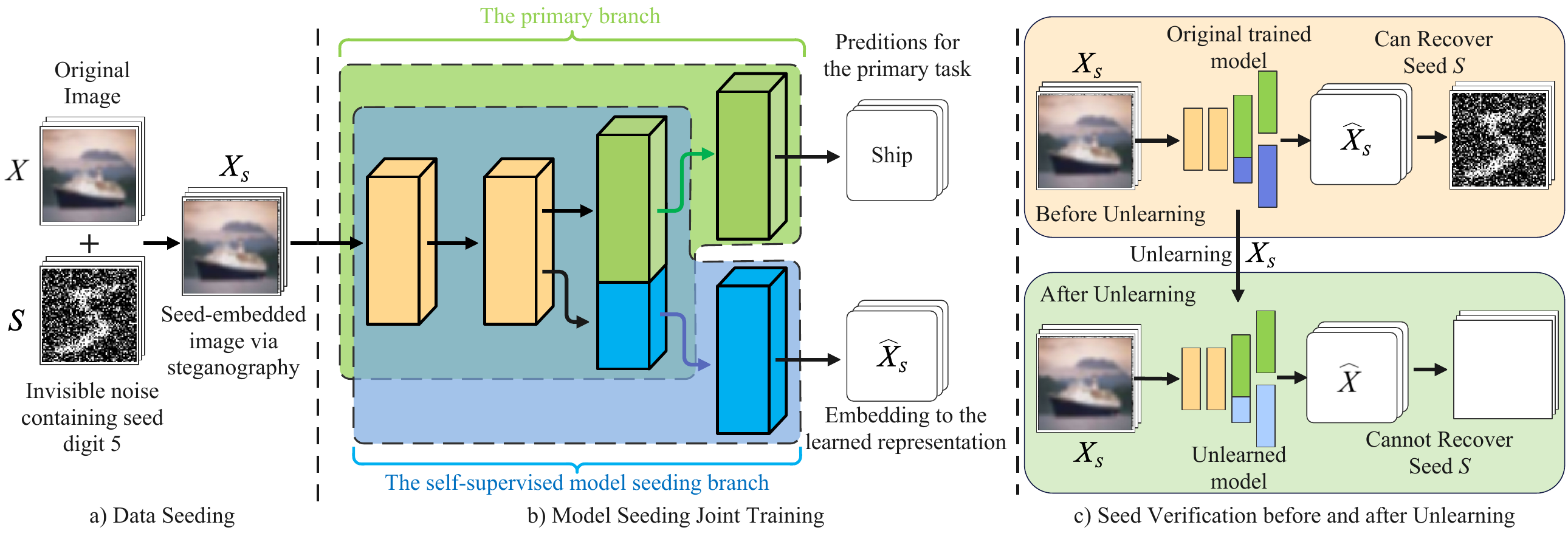}
 	\caption{Overview of the SMS Scheme. The SMS scheme consists of three main phases: (a) Users generate seeds and integrate them into their data. (b) The ML server jointly trains the model for both the primary task (green block) and the self-supervised model seeding task (blue block), with the overlapped area representing the shared network layers of the model. (c) Users verify if their seeds have disappeared from the model following unlearning requests.}
 	\label{fig_vmuviawatermark}
 \end{figure*}

\subsubsection{Functionality Preservation}
The model seeding scheme should not come at the cost of significant model utility degradation. In other words, the performance of the seed-embedded model should be only slightly worse than, or equivalent to, that of the originally trained model. The formal definition is 
\begin{equation} \label{functional_preserve}
	\Pr\nolimits_{(x,y) \sim \mathcal{T}_{\text{primary}}} \{d(M(x), M_{\texttt{S}}(x)) \leq \delta \}  \approx 1,
\end{equation}
which means that the seed-embedded model must perform similarly to the original model from the primary task perspective. For ease of understanding, some studies smooth the \Cref{functional_preserve} as 
\[
\forall x \in \mathcal{X}, d(M(x), M_{\texttt{S}}(x)) \leq \delta.
\]
This equation is easier to solve than \Cref{functional_preserve}, as it only requires checking that the output distribution of $M_{\texttt{S}}$ does not deviate significantly from that of the originally trained model $M$. In this paper, we additionally employ a self-supervised model seeding task, which differs from the primary task of the original model service. Therefore, we prefer using \Cref{functional_preserve} as the formalization of functionality-preservation. This means that while the parameters of the seed-embedded model $M_{\texttt{S}}$ may differ from those of the originally trained model, it must perform similarly in the primary task. Additionally, the method should satisfy the previously defined verifiability and unambiguity requirements.

\section{Self-Supervised Model Seeding for Verification of Machine Unlearning} \label{vmu_method}

\subsection{Overview of SMS}
Here, we provide a brief overview of the primary concept and process of our approach. The main procedure of the proposed Self-supervised Model Seeding (SMS) scheme is illustrated in \Cref{fig_vmuviawatermark}. The SMS scheme comprises three main phases: data seeding, model seeding joint training, and seed verification. Firstly, the data owner embeds their data with unique seeds and uploads the seed-embedded data to the server for ML model training. Secondly, the model seeding joint training phase is designed to simultaneously train both the primary task and the self-supervised model seeding task, ensuring that the model learns the seed while preserving the utility of the primary task. Finally, the seed verification phase allows the data owner to verify whether their data has been used in the model training and unlearned after unlearning.

\subsection{Data Seeding}

Our final goal of SMS is to establish the links between seeds, genuine samples, and models so that we can verify the unlearning of genuine samples from the seed status of the model. Therefore, we first embed the seeds into the data, making the seed information an integral feature of the data for model training. The generated seeds can be invisible additive noises containing secrecy. The secrecy can be flexibly designed by the user, such as a name, a user index, or a generated key with a random seed. In this paper, for convenience, we use a small digit image with noise as seeds $s$, where $s \in \mathcal{S}$. The seed $s$ belongs to the seed space $\mathcal{S}$, which is a subspace of the input data space $\mathcal{X}$.

Assume we have a data seeding function $\texttt{E}_1(\cdot, \cdot)$. To implement the data seeding function $\texttt{E}_1(\cdot,\cdot)$, we can directly embed the seed into the data using image steganography techniques~\cite{baluja2019hiding,tamimi2013hiding}. Alternatively, we can use a simpler method similar to adding patches, as employed in backdoor methods~\cite{chen2017targeted,wang2019neural,lin2020composite}. The function $\texttt{E}_1(\cdot, \cdot)$ can thus be defined as: 
\begin{align} \label{eq_embed1}
	\texttt{E}_1(x, s) = (1-v) \otimes x + v \otimes s,
\end{align}
where $\otimes$ represents the element-wise product, and $v$ is a mapping parameter with the same form as $x$, where each element ranges within $[0,1]$. By adjusting the value of $v$ in \Cref{eq_embed1}, we can set the seeds to an invisible level. The tradeoffs involved in this adjustment will be discussed in more detail in the experimental section.

\subsection{Model Seeding Joint Training}
Embedding seeds into the model's latent space using seed-embedded data is considerably more challenging than backdooring the model. The backdoor method involves mixing backdoored samples, which contain a designed trigger and altered labels associated with that trigger. Backdoor-based methods do not require modifications to the training process, but the altered labels of samples for training compromise model utility. In contrast, model seeding methods only introduce seeds into the data without altering the labels, thereby preserving data usability. However, this approach presents a greater challenge in embedding the seed-embedded data into the model compared to direct backdoor training.

\vspace{1mm}
\noindent
\textbf{Primary Task.} We have introduced a data seeding method that integrates seeds as features into users' data. To embed seeds into the model while mitigating utility degradation, the proposed method does not alter the labels used for the ML service task, referred to as the primary task (green block in \Cref{fig_vmuviawatermark}). The objective of this approach is described as: 
\begin{equation} \label{primary_loss}
	\theta^* = \arg\min_{\theta}  \frac{1}{N}\sum_{i=1}^N\mathcal{L}_{primary}( M(x_{i,s_i};\theta), y_i),
\end{equation}
where $\theta$ represents the parameters of the model $M$, $x_{i,s_i}$ is the seed-embedded sample generated using the previously introduced techniques, and $\mathcal{L}_{primary}(M(x_{i,s_i};\theta), y_i)$ denotes the primary loss incurred when predicting $M(x_{i,s_i};\theta)$ with the true label $y_i$. This primary loss function is typically implemented using cross-entropy loss~\cite{goodfellow2016deep,zhang2018generalized}. 

\vspace{1mm}
\noindent
\textbf{Self-supervised Model Seeding.} To embed the seed in the latent space without the assistance of the labels, we designed a novel approach by incorporating a self-supervised model seeding task. This task aims to learn detailed features of the data, including seeds. The objective of this task is described as follows:   
\begin{equation} \label{self_loss}
	\theta^* = \arg\min_{\theta}  \frac{1}{N}\sum_{i=1}^N\mathcal{L}_{self}( M(x_{i,s_i};\theta), x_{i,s_i}),
\end{equation}
where $\mathcal{L}_{self}(M(x_{i,s_i};\theta), x_{i,s_i})$ represents the self-supervised model seeding loss function that learns the features of the seed-embedded samples. This self-supervised model seeding loss function can be implemented using the loss mechanism of an autoencoder~\cite{baldi2012autoencoders,mescheder2017adversarial}. One advantage of using self-supervised learning to embed seeds, rather than a classification task with seed label, is that it does not require disclosing additional information about the seed to the server. All seed information is integrated into the data as features. Thus, by designing server-invisible features as seeds, we can effectively conceal the seeds from the server. 



\vspace{1mm}
\noindent
\textbf{Seeding-Joint Training.} As shown in \Cref{fig_vmuviawatermark}, we design the primary and self-supervised model seeding tasks to share most parameters, except for the model's last layer. We optimize these two tasks using a two-objective learning method inspired by \cite{sener2018multi,lin2019pareto}. One simple approach is to introduce two weights, $\alpha^p$ and $\alpha^s$, to optimize them as follows:
\begin{equation} \label{multi_opt}
	\arg \min_{\theta} \  \alpha^p \mathcal{L}_{primary} + \alpha^s \mathcal{L}_{self},
\end{equation}
where the optimized $\alpha^p$ and $\alpha^s$ can be achieved using muti-objective optimization methods as did in \cite{sener2018multi}. In this optimizing way, we can ensure the seeds embedding effect while mitigating the model utility degradation for the primary task.

\begin{algorithm}[t]
	\caption{Self-supervised Model Seeding (SMS)} \label{VMU}
	\begin{small} 
		\BlankLine
		\KwIn{$D_{train}$, and epochs $E$}
		\KwOut{The seed-embedded model $M_{\texttt{S}}$ and seed verifying model $\mathcal{V}$}
		\textbf{Data Seeding:} \hspace{2.2cm} $\rhd$ Run on the user $u$ side \\
		\For{sample $x_i$ in $D_u$}{
			Generate seed $s_i$ for sample $x_i$; \\ 
			Embed the seed $s_i$ into the sample $x_i$ using \Cref{eq_embed1}: $x_{i,s_i} = (1-v) \otimes x_i + v \otimes s_i$; \\
		}
		Upload the seed-embedded dataset $D_{u,s}$ to server for model training; \\
		\textbf{Model Seeding Joint Training:} \hspace{1cm} $\rhd$ Run on the server side \\
		Collect all datasets from users and prepare the training dataset $D_{train} = \{D_{1,s_1}, ..., D_{u,s_u}, .., D_{n, s_n}\}$; \\
		Initialize the model $M$ with parameters $\theta$; \\
		\For{$E$ epochs}{
			Sample minibatch of $m$ data points $\{(x_{i,s_i}, y_i)\}_{i=1}^m$ from the collected whole dataset $D_{train}$; \\
			Calculate the loss primary loss and self-supervised training loss of the sampled minibatch using \Cref{primary_loss,self_loss};   \\
			Update model parameters using \Cref{multi_opt}:
			$\newline \theta \gets \theta - \eta \nabla_{\theta} (\alpha^p \mathcal{L}_{primary} + \alpha^s \mathcal{L}_{self})$   \\
		}
		Achieve the seed-embedded model $M_{\texttt{S}}$ \\
		\textbf{Seed Verification:} \hspace{14mm}  $\rhd$ Run on the user $u$ side\\
		Initialize and prepare the auxiliary $D_{veri.}$ \\
		\For{$x_i$ in $D_u$ and $x_{i,s_i}$ in $D_{u,s}$}{
			$D_{veri.}$ adds the negative sample $(x_i, 0)$ \\
			$D_{veri.}$ adds the positive sample  $(x_{i,s_i},1)$ \\
		}
		Initialize the verification model $\mathcal{V}$ and train it using $D_{veri.}$ \\
		\textbf{Verification before Unlearning:} \\
		Achieve the output of the ML model using $x_{i,s_i}$: 
		$( \hat{y_i}, \hat{x_{i,s_i}}) \gets M_{\texttt{S}}(x_{i,s_i})$ \\
		Verify result $r \gets \mathcal{V}(\hat{x_{i,s_i}})$  \\
		\textbf{Unlearning Verification:} \\
		The user upload the unlearning requirements of $x_{i,s_i}$, and the server unlearns $x_{i,s_i}$ from the ML model using an unlearning algorithm $\mathcal{U}$: $M_{\texttt{S}}^{\mathcal{U}}  \gets  \mathcal{U}(M_{\texttt{S}}, x_{i,s_i})$ \\
		Achieve the output of the unlearned model using $x_{i,s_i}$: 
		$( \hat{y_i^u}, \hat{x_{i,s_i}^u}) \gets M_{\texttt{S}}^{\mathcal{U}}(x_{i,s_i})$ \\
		Verify result $r \gets \mathcal{V}(\hat{x_{i,w_i}^u})$  \\
		Return $M_{\texttt{S}}$ and $\mathcal{V}$;
	\end{small}
\end{algorithm}

\subsection{Seed Verification}
After embedding the unique seed into the model latent space through the seeding-joint training phase, the server can provide evidence of data usage by demonstrating whether the seed information exists in the model. The most straightforward method is to verify if the self-supervised model's output contains the seed when the input is the seed-embedded sample $x_{i,s_i}$. If the output of the self-supervised model includes the seed $s_i$, it indicates that the model retains the contribution of the specified sample $x_{i,s_i}$. Conversely, if the seed is absent, it signifies that the model has unlearned the specified sample.

The output of the self-supervised model seeding task is the reconstructed sample based on the input. We train the self-supervised model as an auto-encoder, which learns all the information about the samples, including the hidden seeds. Based on the self-supervised output, a verifying model is individually trained by the user to identify the designed seeds for data removal verification.

Since the data owner possesses the original sample and knows the seeds, they can directly label the original sample $x$ as negative and the seed-embedded sample $x_{s}$ as positive. The auxiliary dataset is denoted as $D_{veri.}$. The data owner can then train a classification model $\mathcal{V}$ based on $D_{veri.}$. When the user needs to identify if the model has used their samples, they use this model to calculate the probability $\Pr \{\mathcal{V}(M_{\texttt{S}}, x_{i,s})=1\}$ for data usage verification. If the verification probability is high and then becomes low after unlearning, it means the data is first used in the model training and then removed from the model by unlearning. Typically, the classification model $\mathcal{V}$ is easy to implement, such as using simple linear neural networks. The pseudocode for the entire SMS process is presented in \Cref{VMU}.

Specifically, the data seeding is conducted on the user side. The user samples data point $x_i$ from their local data $D_u$, generates a seed $s_i$ for the sampled data, and embeds the seed $s_i$ into sample $x_i$ using \Cref{eq_embed1}. The user then uploads the seed-embedded dataset to the server for model training. The server performs model seeding-joint training to learn both the primary task knowledge and the seed information from the users' data, preparing for data usage and removal verification. The server collects all datasets from users and initializes the model $M$ with parameters $\theta$. The model is trained through a standard process but incorporates two loss functions: the primary loss and the self-supervised model seeding loss. The seed verification is executed on the user side. Users train their unique Verifier $\mathcal{V}$ to identify their own seeds and utilize the trained $\mathcal{V}$ to assess the unlearning effect.

{ 
\subsection{Theoretical Analysis}

In this section, we provide a theoretical analysis from the mutual information perspective for the SMS verification, i.e., the disappearance of seeds is equivalent to complete data forgetting, which at least provide much more information than backdoor label changing for verification.

\begin{proposition}[Strictness of SMS Verification]
Let $x \in \mathcal{X}$ be a genuine user sample with label $y \in \mathcal{Y}$ and $s \in \mathcal{S}$ be a user-generated seed and  $x_s = \texttt{E}_1(x, s)$ be the seed-embedded sample using \Cref{eq_embed1}. Let $z$ be the latent representation that is jointly optimised for the primary task and the self-supervised reconstruction task.
Since $s$ is not connected or highlighted with a label in primary task, but treated as normal $x$ feature by self-supervised task, then, the erasure mutual information of $I(z; x_s)$ provide a much more strictness than the erasure of $I(z; y)$; formally, $I(z; y) \ll I(z; x_s)$.
\end{proposition}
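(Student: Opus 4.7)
The plan is to exploit the asymmetry between the two training objectives: the primary loss can only force $z$ to encode at most $\log|\mathcal{Y}|$ nats about the label $y$, whereas the self-supervised reconstruction loss forces $z$ to retain essentially the full content of $x_s$, including the seed $s$. I would first derive an upper bound on $I(z;y)$ using only standard information-theoretic inequalities, then derive a lower bound on $I(z;x_s)$ via a rate-distortion argument tied to $\mathcal{L}_{self}$, and finally compare the two on realistic image inputs.

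For the upper bound I would apply the fact that $y$ takes values in a finite label set $\mathcal{Y}$ of cardinality $K$, giving immediately $I(z;y) \le H(y) \le \log K$. This step is essentially tight: the only information the primary head ever needs from $z$ is enough to identify the class, so no matter how well the joint objective in \Cref{multi_opt} is optimised, $\log K$ caps the label-channel capacity. Crucially, because SMS leaves labels unchanged, $s$ never enters $\mathcal{L}_{primary}$, so the primary task supplies no additional channel through which seed information can inflate this bound. For the lower bound, I would start from the identity $I(z;x_s) = H(x_s) - H(x_s \mid z)$ and control $H(x_s \mid z)$ via the expected reconstruction distortion $D = \mathbb{E}[d(x_s,\hat{x}_s)]$ achieved at convergence of $\mathcal{L}_{self}$. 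A standard rate-distortion argument (or, after discretising pixels, a Fano-type inequality relating $H(x_s\mid z)$ to the per-pixel reconstruction error probability) yields $I(z;x_s) \ge H(x_s) - \phi(D)$, where $\phi(D) \to 0$ as the self-supervised loss shrinks. Combining the two, the gap obeys $I(z;x_s) - I(z;y) \gtrsim H(x_s) - \phi(D) - \log K$, which stays large for any image benchmark where $H(x_s)$ is on the order of tens to thousands of nats and $\log K$ is only a handful of nats, establishing the $\ll$ claim.

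The main obstacle I expect is making the lower-bound step rigorous without overclaiming: the rate-distortion inequality is only informative once we quantify how small $\phi(D)$ becomes at the joint optimum, which requires either a capacity assumption on the shared encoder and the reconstruction head or a direct empirical substitution of the observed reconstruction error. A secondary subtlety is that $z$ is simultaneously shaped by both losses under the joint-training structure in \Cref{multi_opt}, so I need to argue that both bounds hold for the \emph{same} representation; this is benign because each bound holds for any $z$, in particular for the one selected by joint optimisation with weights $\alpha^p,\alpha^s$. Finally, to tie $I(z;x_s)$ back to verifiability of $s$ itself (and hence to \Cref{veri}), I would observe that $x_s = \texttt{E}_1(x,s) = (1-v)\otimes x + v\otimes s$ is invertible in $s$ on the masked support $\{v \neq 0\}$, so under the user-side independence of $x$ and $s$, $I(z;s)$ inherits a comparable lower bound, which is exactly what the verifier $\mathcal{V}$ exploits during unlearning verification.
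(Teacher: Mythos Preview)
Your proposal is correct and follows essentially the same route as the paper: upper-bound $I(z;y)$ by $H(y)\le\log K$, lower-bound $I(z;x_s)$ via a rate--distortion/reconstruction argument (the paper phrases this as the $\beta$-VAE/ELBO bound $I(z;x_s)\ge H(x_s)-\beta D$, which is exactly your $H(x_s)-\phi(D)$ with a specific $\phi$), and compare. Your treatment is in fact slightly more careful than the paper's in handling the joint-optimisation subtlety and in linking back to $I(z;s)$, but the skeleton is identical.
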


\begin{proof}
By the data-processing inequality, we can get the uppoer bound on label information $I(z;y) \leq H(y) \ll I(z;x)$. For an auto-encoder with distortion $D = \mathbb{E} \lVert x_s - \hat{x}_s\rVert^{2}$, one obtains the classic VAE~\cite{kingma2014auto} bound $I(z; x_s) \ge H(x_s) - \beta D$ ($\beta$-VAE/ELBO). Since $H(x_s) \approx H(x) + H(s \mid x)$,  we have $I(z; y) \ll I(z; x_s)$.
\end{proof}

Since the seed via the self-supervised objective encodes $I(z;x_s)$ (virtually the full input entropy) into the same representation shared by the primary task, eliminating the seed forces the model to uproot those high‑information features. By contrast, a backdoor unlearning step only needs to break the narrow trigger to wrong-label shortcut. Hence, ``seed disappearance'' constitutes a stricter and safer forgetting criterion: once the seed can no longer be verified, any label‑level information about that sample has already been removed.
}

\section{Performance Evaluation} \label{ex}



\begin{table*}[t]
	\scriptsize
	\caption{Overall Evaluation Results of Verification for Unlearning Genuine Samples on MNIST, CIFAR10 and CelebA.}
	\label{tab_total}
	\resizebox{\linewidth}{!}{
		\begin{tabular}{c|ccccccccc}
			\toprule[1pt]
			\multirow{2}{*} {Learning Verification} & \multicolumn{3}{c} {MNIST, ${\it SSR}=0.6\%$}& \multicolumn{3}{c} {CIFAR10, ${\it SSR}=0.6\%$} & \multicolumn{3}{c} {CelebA, ${\it SSR}=0.6\%$}  \\
			\cmidrule(r){2-4}   \cmidrule(r){5-7} \cmidrule(r){8-10}
			& Non-Verif.	& MIB   & SMS		 	& Non-Verif. 		& MIB   &SMS  		 & Non-Verif.   	& MIB   &SMS  \\
			\midrule 
			Model Acc.	& 99.39\%  & 99.31\%  &\textbf{99.46\%}         & 89.25\%	  & 89.24\%   		&\textbf{89.57\%} & 97.22\% & 97.02\%	&\textbf{97.28}\% \\
			Verifiability   & -   			&100\%   	&\textbf{100\%}   				& - 			 & \textbf{100\%} &97.67\%				& -  	 	& 93.27\% & \textbf{95.38\%} \\
			Unambiguity  & -		 & 100\%         & \textbf{100\%}   		  &-	          & 98.67\% 			& 96.67\%				& - 		&57.38\%  		& \textbf{90.77\%} \\
			Running time (s)  & \textbf{2087} & 2593 & 2873   & \textbf{2289}         &2627  & 3263 		& \textbf{3015} 		&3421 		& 4225 \\
			\midrule[0.12em]
			\multirow{2}{*} { \makecell[c]{Retraining Unlearning \\Verification} } & \multicolumn{3}{c} {MNIST, ${\it SSR}=0.6\%$}& \multicolumn{3}{c} {CIFAR10, ${\it SSR}=0.6\%$} & \multicolumn{3}{c} {CelebA, ${\it SSR}=0.6\%$}  \\
			\cmidrule(r){2-4}   \cmidrule(r){5-7} \cmidrule(r){8-10}
			& Non-Verif.	& MIB   &SMS		 	& Non-Verif. 		& MIB   &SMS  		 & Non-Verif.   	& MIB   &SMS  \\
			\midrule 
			{Model Acc.}	& 99.32\%  & 99.33\%  &\textbf{99.43\%}         & 89.22\%	  & 89.39\%   	& \textbf{89.52\%} & 97.18\% & 96.83\%	& \textbf{97.27\%} \\
			{Verifiability}   & -   & \textcolor{red}{100\%}   &\textbf{0\%}   & -  & \textcolor{red}{100\%} &\textbf{0\%}		& -  	 		& \textcolor{red}{93.12\%} & \textbf{2.05\%} \\
			{Unambiguity}  & -  & 100\%         & \textbf{100\%}   		  &  -	          & 99.99\%  & \textbf{100\%} 						& - 		 &54.29\%  		& \textbf{99.12\%} \\
			{Running time (s)}  & \textbf{2070} & 2543 & 2839   & \textbf{2269}         &2612  & 3238 		& \textbf{2994} 		&3401 		& 4198 \\
			\midrule[0.12em]
			\multirow{2}{*} {\makecell[c]{SISA Unlearning \\Verification }} & \multicolumn{3}{c} {MNIST, ${\it SSR}=0.6\%$}& \multicolumn{3}{c} {CIFAR10, ${\it SSR}=0.6\%$} & \multicolumn{3}{c} {CelebA, ${\it SSR}=0.6\%$}  \\
			\cmidrule(r){2-4}   \cmidrule(r){5-7} \cmidrule(r){8-10}
			& Non-Verif.	& MIB   &SMS		 	& Non-Verif. 		& MIB   &SMS  		 & Non-Verif.   	& MIB   &SMS  \\
			\midrule 
			Model Acc.	& 99.29\%  & 99.33\%  &\textbf{99.39\%}         & 89.12\%	  & 89.37\%   	& \textbf{89.44\%} & 97.15\% & 96.77\%	& \textbf{97.26\%} \\
			Verifiability   & -   & \textcolor{red}{100\%}   &\textbf{0\%}   & -  & \textcolor{red}{100\%} &\textbf{0\%}		& -  	 		& \textcolor{red}{92.89\%} & \textbf{3.41\%} \\
			Unambiguity  & -  & 100\%         & \textbf{100\%}   		  &  -	          & 99.32\%  & \textbf{100\%} 						& - 		 &53.72\%  		& \textbf{98.21\%} \\
			Running time (s)  & \textbf{410} & 503 & 562   & \textbf{451}         &520  & 640 		& \textbf{589} 		&677 		& 823 \\
			\bottomrule[1pt]
	\end{tabular}}
\end{table*}

\subsection{Experimental Settings}


\noindent
\textbf{Datasets.} 
We conducted experiments on three widely-used public datasets: MNIST, CIFAR10, and CelebA. For MNIST and CIFAR10, the goal was to train a model for 10-class classification. For CelebA, we focus on binary classification to determine gender attributes in face images.


\vspace{1mm}
\noindent
\textbf{Models.} 
We use two model architectures of different sizes in our experiments: a 5-layer multi-layer perceptron (MLP) connected by ReLU and ResNet-18. Specifically, the verification model $\mathcal{V}$ for all datasets is trained using the 5-layer MLP model. The input to $\mathcal{V}$ is the self-reconstructed samples produced by the self-supervised model seeding task. To implement the model seeding joint training structure, we employ ResNet-18 as shared layers to learn a robust data representation. Following the ResNet-18 model, we connect two 5-layer MLP models: one for the primary task and one for the self-supervised model seeding task. The main structure is shown in \Cref{fig_vmuviawatermark}b. 

We fixed the minibatch size to 16 for all experiments. The learning rate is set to $\eta=0.001$ for training the model on MNIST and $\eta=0.005$ for training on CIFAR-10 and CelebA. The number of training epochs is set to $E=50$. All models are implemented using PyTorch, and experiments are conducted on NVIDIA Quadro RTX 6000 GPUs.




\vspace{1mm}
\noindent
\textbf{Evaluation Metrics.}
The primary goal of unlearning verification is to enable users to verify whether their data has been removed from the model after unlearning. Based on the requirements for unlearning verification outlined in \Cref{defining_requirements}, we summarize three metrics to evaluate the effectiveness of verification and preservation of model utility: verifiability, unambiguity, and model accuracy. Additionally, we use running time to evaluate the efficiency of the proposed method. These four metrics are summarized as follows. 
\begin{itemize} [itemsep=0pt, parsep=0pt, leftmargin=*]
	\item \textbf{Verifiability.} Verifiability assesses whether the seed can be successfully verified. This can be calculated using the correct classification rate of $\mathcal{V}$ based on the seed-embedded data as:  $\textbf{Verifiability} =    \frac{1}{m} \sum_i^m  \mathbb{I}(\mathcal{V}(M_{\texttt{S}}, x_{i,s_i})=1 ),$
	where $\mathbb{I}$ is the indicator function that equals 1 when its argument is true ($\mathcal{V}(M_{\texttt{S}}, x_{i,s_i})=1$) and 0 otherwise. Here, $x_{i,s_i} \in D_{u,s}$ represents the seed-embedded data, and $m$ is the size of $D_{u,s}$. 
	\item \textbf{Unambiguity.} Unambiguity ensures that other random seeds or the absence of seed are not mistakenly verified as the user's specific seed. This is calculated using the $\mathcal{V}$ results based on data with other seeds as: $\textbf{Unambiguity} =   \frac{1}{m}  \sum_i^m  \mathbb{I} ( \mathcal{V}(M_{\texttt{S}}, x_{i,s_i'})=0 ),$
	where $\mathbb{I}$ is the indicator function that equals 1 when its argument is true ($\mathcal{V}(M_{\texttt{S}}, x_{i,s_i'})=0$) and 0 otherwise. 
	$x_{i,s_i'} \in D_{u,s'}$ and $D_{u,s'}$ is the data generated with other seeds. 
	\item \textbf{Accuracy.} The model accuracy evaluates functionality-preserving to test if the verification scheme will influence the primary task. It is calculated using the model accuracy on the test dataset for the primary task. 
	\item  \textbf{Running Time.} We calculate running time by recording the running time used in one batch and multiplying it with the training epochs.
\end{itemize}

\vspace{1mm}
\noindent
\textbf{Compared Verification Method and Tested Unlearning Benchmarks.} 
There are three main data removal verification approaches~\cite{hu2022membership,guo2023verifying,sommer2022athena}, all based on backdooring methods. We directly compare our SMS scheme with the MIB method \cite{hu2022membership}, as it has demonstrated the best verification effectiveness among these approaches. \citeauthor{guo2023verifying} \cite{guo2023verifying} focused extensively on designing invisible backdoor triggers, which, to some extent, diminished the verification ability. It is important to note that backdoor-based methods mix backdoored samples into the training dataset and can only build the connection between backdoored samples and models, only supporting verification of the backdoored samples. In contrast, SMS verifies the unlearning of genuine samples with integrated seed features. {Additionally, we also compare SMS with the membership inference attack verification (MIA) methods used in unlearning evaluation~\cite{zhao2024makes,kurmanji2023towards,chen2021machine}. Note that, in the implementation of MIA, MIB and SMS, the users only have the black-box access to the ML server's model. The users can only access the uploaded unlearning samples and the output of the server's model.
}

We evaluate SMS and the state-of-art method MIB \cite{hu2022membership} based on three mainstream unlearning benchmarks: naive retraining, SISA~\cite{bourtoule2021machine}, and VBU~\cite{nguyen2020variational}. Naive retraining is the gold-standard method for machine unlearning, and the SISA \cite{bourtoule2021machine} method is a representative exact unlearning method, and the VBU is a representative approximate unlearning method proposed to unlearn models trained using variational Bayesian inference. In our SISA implementation, we set $k=5$ disjoint shards and their respective sub-model.


\subsection{Evaluation Overview}

We first present the overall evaluation results on MNIST, CIFAR-10, and CelebA using the four introduced evaluation metrics on models before and after unlearning, as shown in \Cref{tab_total}. Bolded values indicate the best performance among these methods, while red-colored values signify results that are opposite from expectations. We primarily compare our method with the state-of-the-art unlearning verification method MIB~\cite{hu2022membership}. To demonstrate functionality preservation, we also record the performance of training the model solely for the primary task, referred to as Non-Verification (abbreviated as Non-Verif.). One important variable that significantly impacts verification performance is the seed-embedded sample rate ($\it{SSR}$) of the entire dataset. A larger $\it{SSR}$ facilitates seed embedding, similar to the backdoored sample rate in MIB. For convenience, we uniformly use the $\it{SSR}$ sign in both SMS and MIB methods. In the overview evaluation, we set $\it{SSR} = 0.6\%$.

\begin{table}[t]
	\scriptsize
	\caption{ {The Verification Effect of Unlearning Genuine Sample of MIA, MIB and SMS}  }
	\label{mia_figure}
	\resizebox{\linewidth}{!}{
		\setlength\tabcolsep{2.5pt}
		\begin{tabular}{c|cc|cc|cc|}
			\toprule
			\multirow{2}{*} { \makecell[c]{\textbf{Datasets}  } } &  \multicolumn{2}{c} {MIA}   &   \multicolumn{2}{c} {MIB}  &   \multicolumn{2}{c} {SMS}    \\
			\cmidrule(r){2-3} \cmidrule(r){4-5} 		\cmidrule(r){6-7} 		
			& Original
			& Unlearned
			& Original
			& Unlearned
			& Original &Unlearned	 \\
			\midrule
			MNIST        & 63.86\%      	 &57.61\%	   & 100\% & 100\%                  & 100\% 	  &0\%    \\
			CIFAR10        & 77.43\%      	 &68.43\%	   & 100\% & 100\%                & 100\%	  &0\%    \\
			CelebA	& 58.37\%     	 &53.93\%	   &93.27\%  & 92.89\%                & 95.38\% 	 &3.41\%     \\
			\bottomrule
	\end{tabular}}
\end{table}

\begin{figure*}[t]
	\centering
	\subfloat[\footnotesize Model Accuracy]{ \label{fig_mnistaccmsrcurve} \rotatebox{90}{ \hspace{8mm}	\scriptsize{ On MNIST} }
		\includegraphics[scale=0.25]{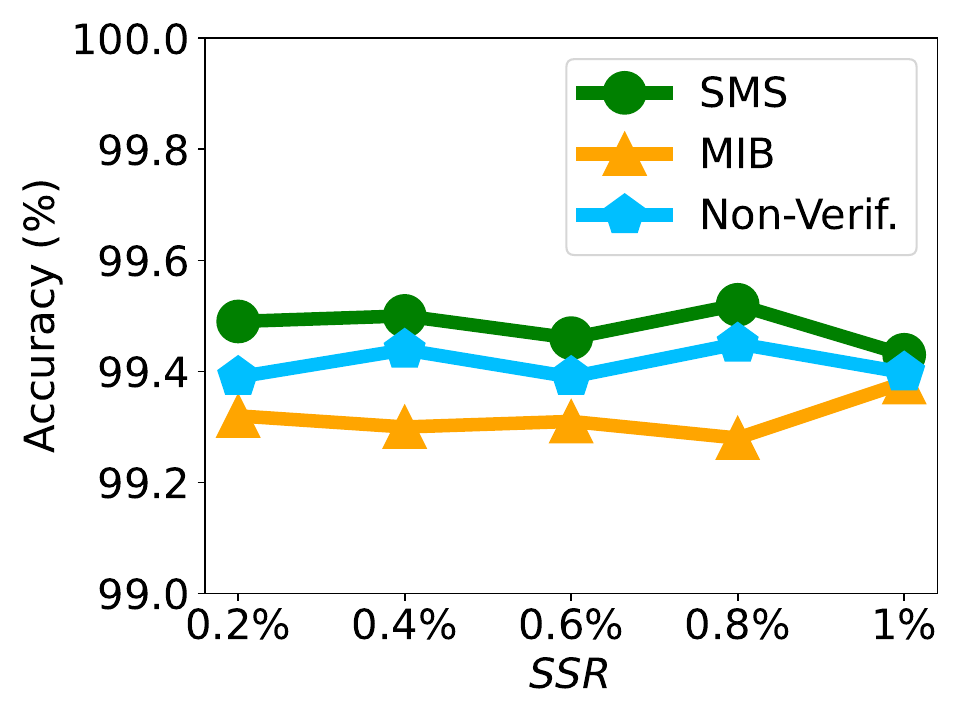}
	}
	\subfloat[\footnotesize Verifiability after Unlearning]{ \label{fig_mnistvsrmsrcurve}
		\includegraphics[scale=0.25]{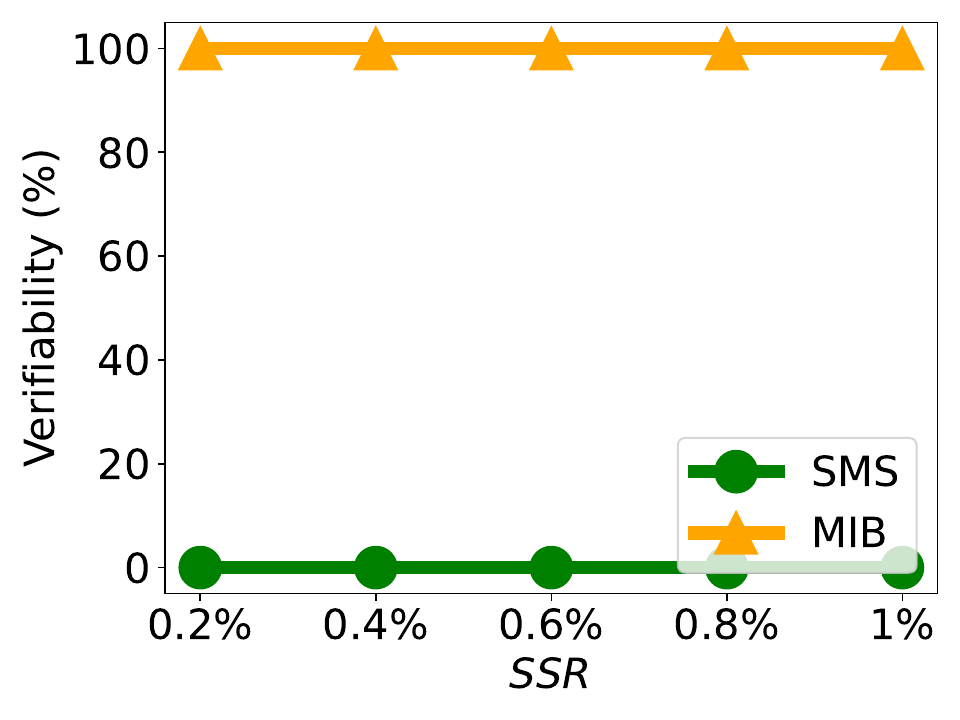}
	}
	\subfloat[\footnotesize Unambiguity]{\label{fig_mnistwvrmsrcurve}
		\includegraphics[scale=0.25]{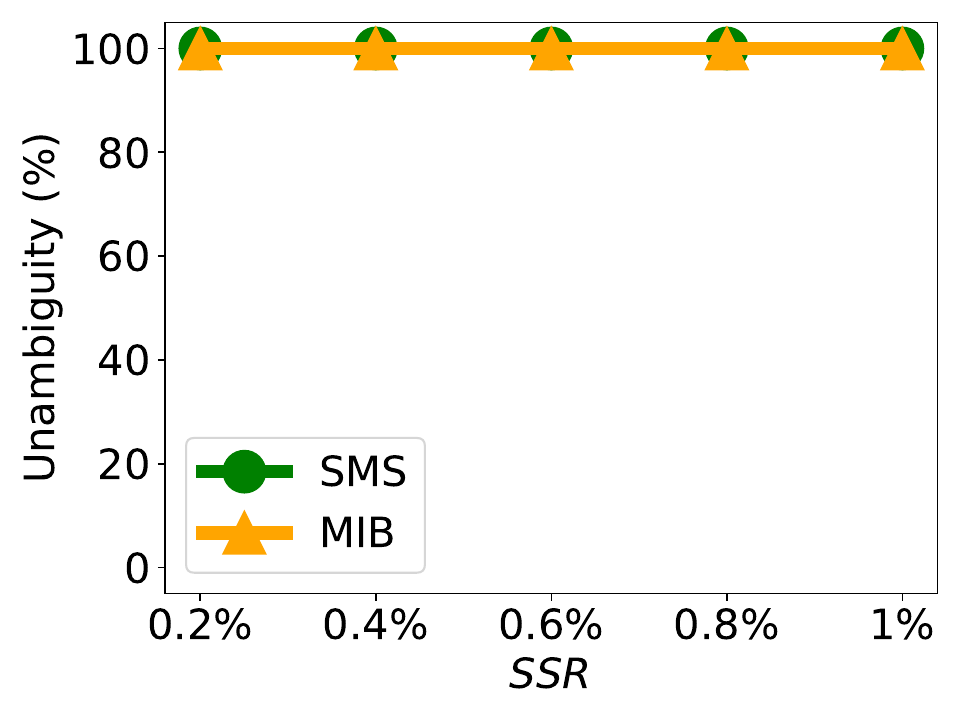}
	}
	\subfloat[\footnotesize Running Time]{	\label{fig_mnistrtmsrbar}
		\includegraphics[scale=0.25]{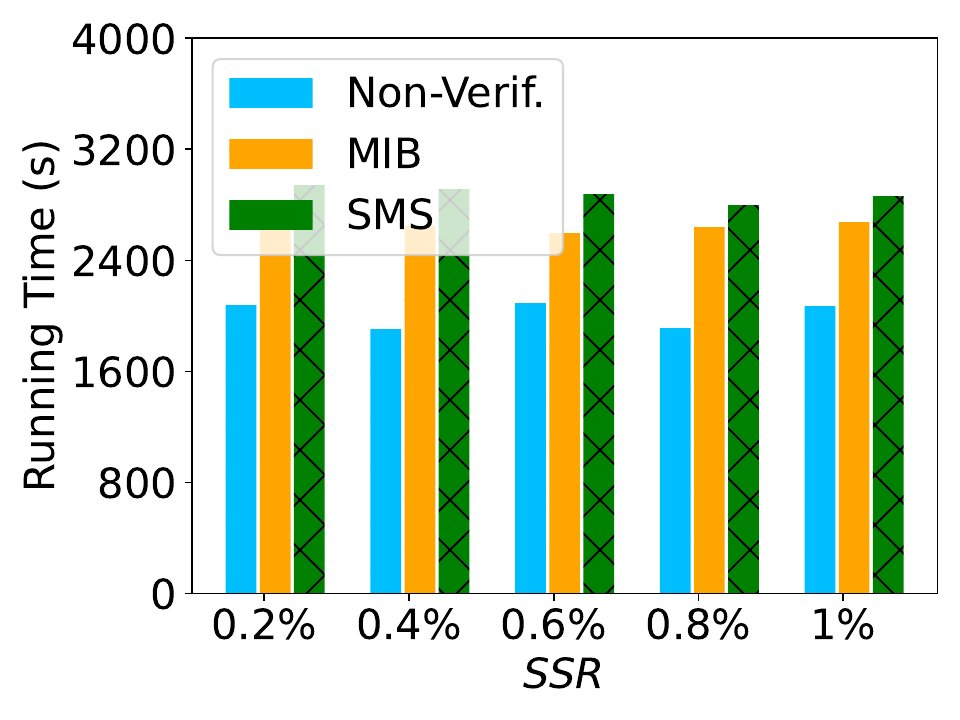}
	}
	\\
	\vspace{-2mm}
	\subfloat[\footnotesize Model Accuracy]{ \label{fig_cifar10accmsrcurve} \rotatebox{90}{ \hspace{8mm}	\scriptsize{ On CIFAR10} }
		\includegraphics[scale=0.25]{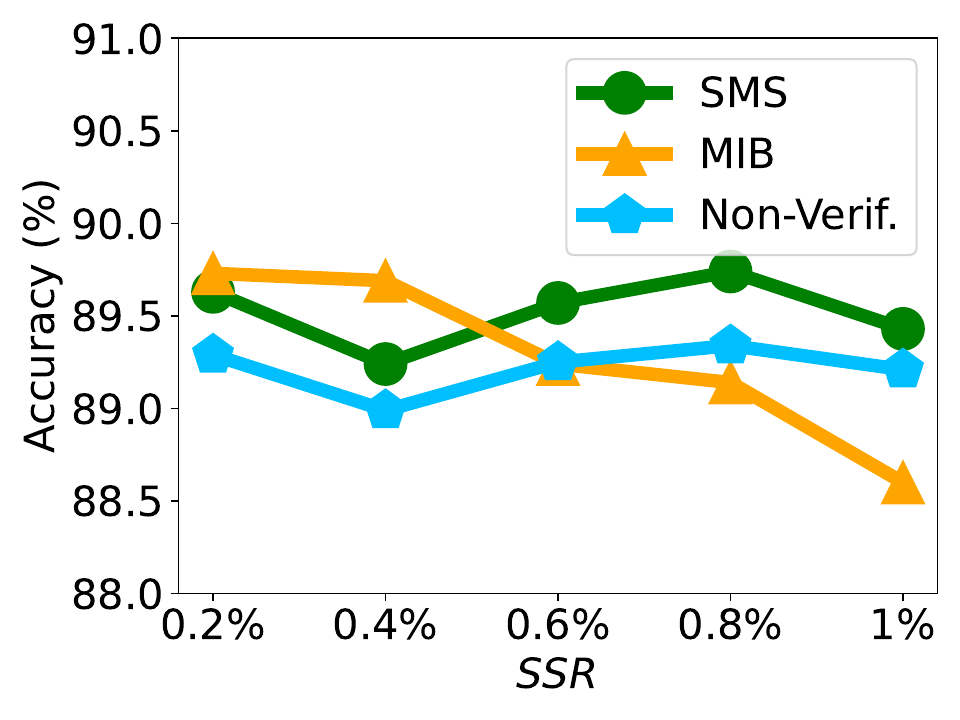}
	}
	\subfloat[\footnotesize Verifiability after Unlearning]{	\label{fig_cifar10vsrmsrcurve}
		\includegraphics[scale=0.25]{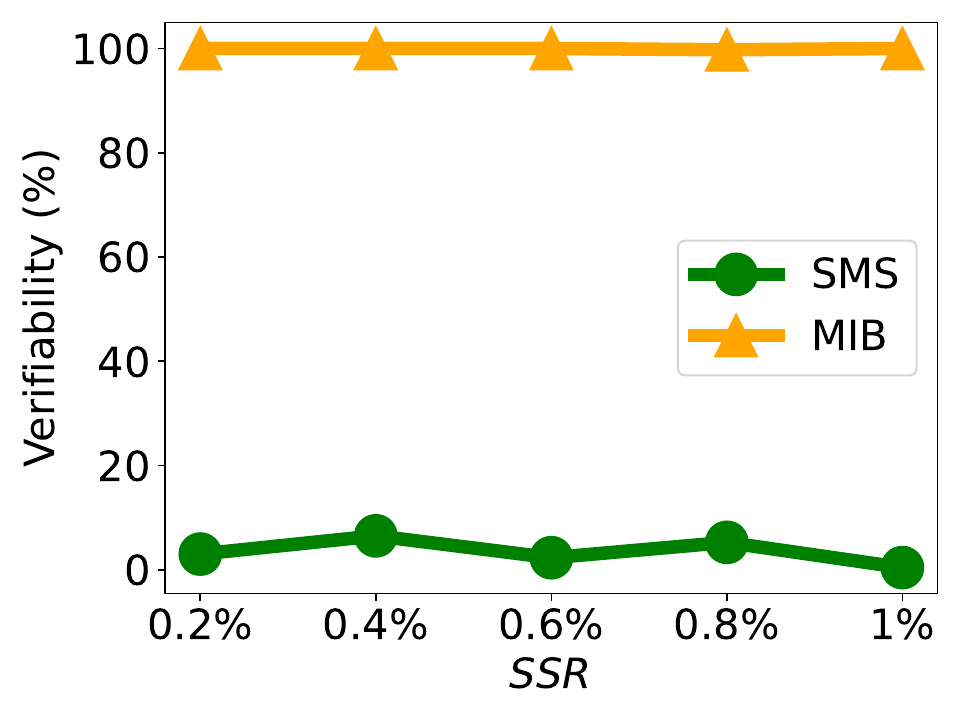}
	}
	\subfloat[\footnotesize Unambiguity]{ 	\label{fig_cifar10wvrmsrcurve}
		\includegraphics[scale=0.25]{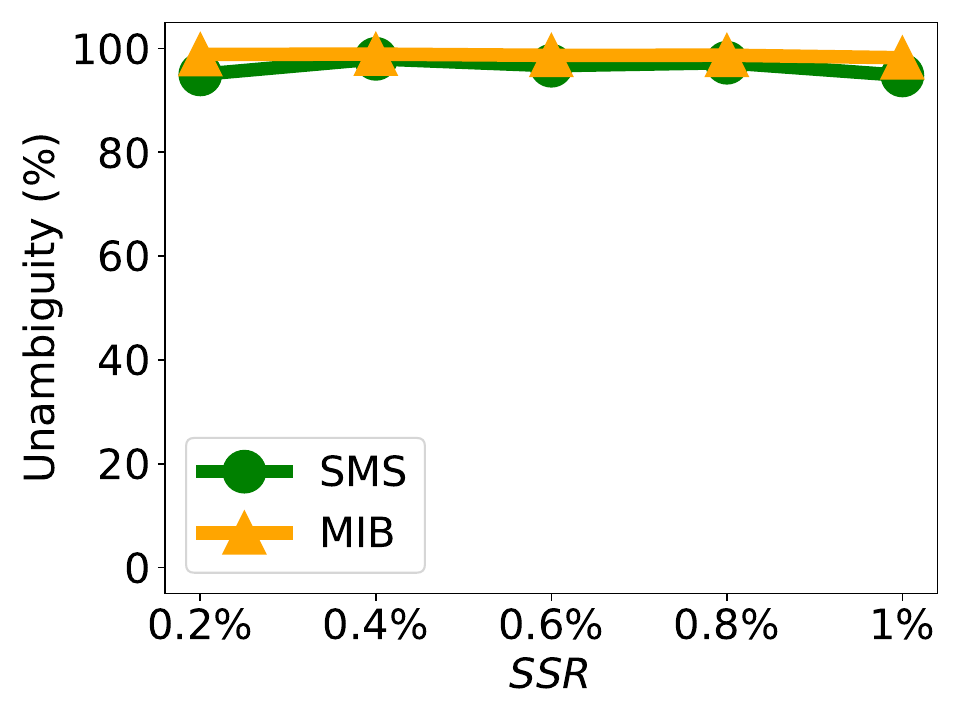}
	}
	\subfloat[\footnotesize Running Time]{	\label{fig_cifar10rtmsrbar}
		\includegraphics[scale=0.25]{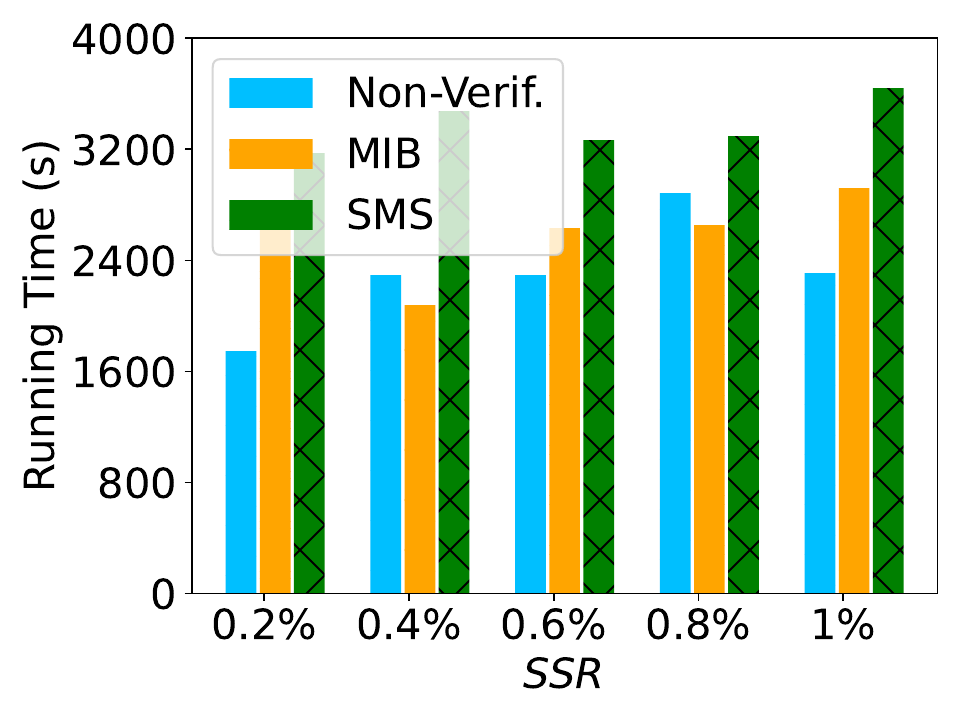}
	}
	\caption{Evaluations of the impact of different ${\it SSR}$ on MNIST and CIFAR10} 
	\label{evaluation_of_msr} 
	\vspace{-2mm}
\end{figure*}

In the evaluation of methods before unlearning, both SMS and MIB achieve effective verifiability and unambiguity. However, MIB verification applies only to backdoored samples, which are distinct from the user's genuine data. Therefore, MIB's verifiability pertains solely to the backdoored samples, not the true genuine samples. In contrast, SMS integrates the seed as a feature within the genuine data, making the verifiability of SMS's seed equivalent to the verifiability of the features of the genuine data, which is more reliable than MIB. With the above benefits of SMS, the cost is that SMS incurs a higher running time due to the additional self-supervised model seeding task, as indicated by the running time metric in the upper half of \Cref{tab_total}.

In the evaluation of methods after unlearning, only the verifiability of SMS reduces to $0$, indicating that SMS can detect the disappearance of the seed from the genuine erased sample after unlearning. {Here, we implement unlearning using both the retraining and the widely adopted exact unlearning method, SISA~\cite{bourtoule2021machine}, and request the removal of several genuine samples rather than the user's entire dataset. SMS successfully verifies that unlearning has been performed (both retraining and SISA) and demonstrates superior functionality preservation across all three datasets, as shown in \Cref{tab_total}.}

Since the membership inference attack methods (MIA)~\cite{kurmanji2023towards,chen2021machine} do not have the unambiguity metrics, we only show the verification effect comparison of MIA, MIB, and SMS on \Cref{mia_figure}. We set $\it{SSR} = 0.6\%$, and the unlearning method is SISA. The verifiability of MIA does drop (such as 63.86\% of the original model and 57.61\% of the unlearned model on MNIST), which indicates the unlearning effect, but is not as effective and obvious as our SMS method.

\subsection{Impact of the Seed-embedded Samples Rate (SSR)} \label{MSR_exp}

In this paper, we examine two main variables that significantly impact verification effectiveness and model training consumption: the seed-embedded sample rate ($\it{SSR}$) and the seed embedding rate ($\it{SER}$, also denoted as $v$ in \Cref{eq_embed1}). We first evaluate the impact of $\it{SSR}$, with the main results on MNIST and CIFAR-10 presented in \Cref{evaluation_of_msr}. When evaluating the impact of $\it{SSR}$, we keep $\it{SER}$ fixed, and vice versa.
We evaluate SMS, MIB and Non-Verif. from four metrics: model accuracy to evaluate the functionality preservation in \Cref{fig_mnistaccmsrcurve,fig_cifar10accmsrcurve}; verifiability after unlearning in \Cref{fig_mnistvsrmsrcurve,fig_cifar10vsrmsrcurve}; unambiguity in \Cref{fig_mnistwvrmsrcurve,fig_cifar10wvrmsrcurve}; and training time in \Cref{fig_mnistrtmsrbar,fig_cifar10rtmsrbar}. 

The effectiveness of SMS and MIB encompasses model utility preservation, verifiability after unlearning, and unambiguity. Functionality preservation requires verification schemes to ensure the model's utility for the primary task. As shown in \Cref{fig_mnistaccmsrcurve,fig_cifar10accmsrcurve}, SMS improves the model's utility for the primary task on both MNIST and CIFAR-10. This improvement is due to SMS's self-supervised model seeding task, which embeds all sample information into the model, benefiting the primary task. In contrast, MIB changes the labels of backdoored samples, thus harming the model's primary task utility. This issue worsens as $\it{SSR}$ increases. We have not included figures for verifiability before unlearning because both SMS and MIB can successfully verify the usage of marked data, although MIB can only verify backdoored samples. For verifiability after unlearning, only SMS can successfully verify the unlearning of genuine samples, while MIB fails to do so, as shown in \Cref{fig_mnistvsrmsrcurve,fig_cifar10vsrmsrcurve}. Regarding unambiguity, both SMS and MIB demonstrate high performance on MNIST and CIFAR-10, as shown in \Cref{fig_mnistwvrmsrcurve,fig_cifar10wvrmsrcurve}.

We evaluate efficiency based on training time, as shown in \Cref{fig_mnistrtmsrbar,fig_cifar10rtmsrbar}. Both SMS and MIB increase the training time compared to a model that solely trains for the primary task. 
For SMS, the training time is not closely related to $\it{SSR}$ because the seeds are integrated as features into the data, which does not affect the size of the training dataset. However, for MIB, a larger $\it{SSR}$ means adding more backdoored samples. Since we set a small $\it{SSR}$, the training time only slightly increases as $\it{SSR}$ increases for MIB. SMS consumes more running time than MIB and Non-Verif., due to the additional self-supervised model seeding task required for model seeding joint training.

\subsection{Impact of the Seed Embedding Rate (SER) } \label{MER_exp}

In this section, we evaluate the impact of the seed embedding rate ($\it{SER}$), which represents the extent of changes the seed introduces to the original sample. For ease of comparison with MIB, we use the data embedding methods as defined in \Cref{eq_embed1}, which is also employed in MIB. In this context, $v$ is equivalent to $\it{SER}$. We use $v$ to control the extent of seed embedding into the original sample $x$. In the experiment described in \Cref{tab_watermark_p}, we embed the digit 7 into the bottom right corner of a dog image as an example, and we set different values of $\it{SER}$ to evaluate its impact.

 \begin{figure*}[t]
 	\centering
 	\subfloat[\footnotesize MIB, SISA, user's whole data]{ 	\label{fig_mibsisawholedatacifar10}
 		\includegraphics[scale=0.25]{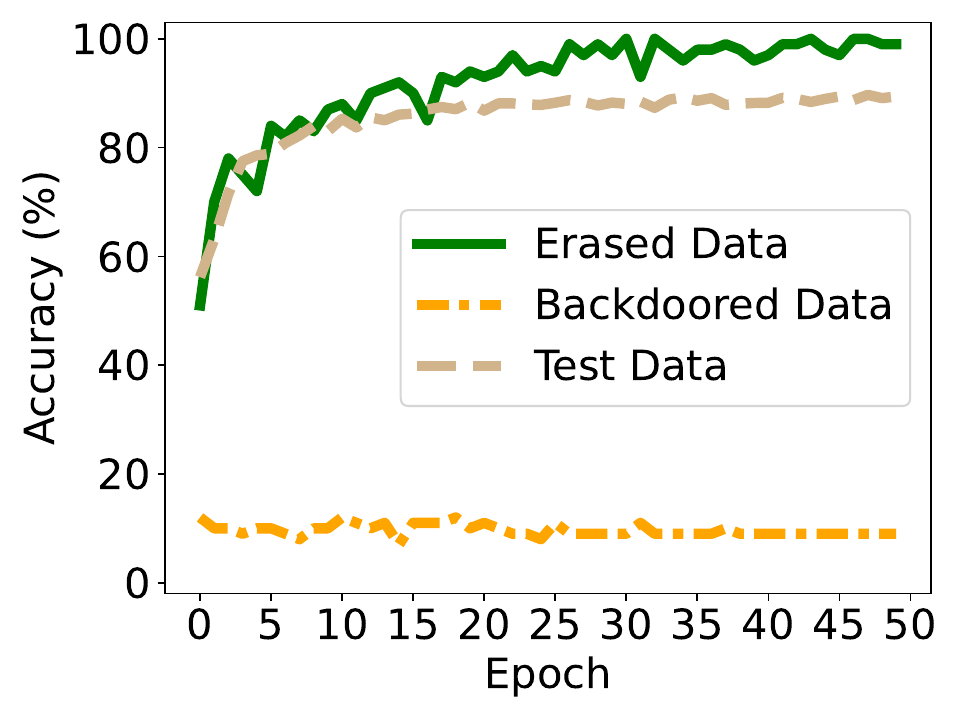}
 	}
 	\subfloat[\footnotesize SMS, SISA, user's whole data]{ 	\label{fig_vmusisawholedatacifar10}
 		\includegraphics[scale=0.25]{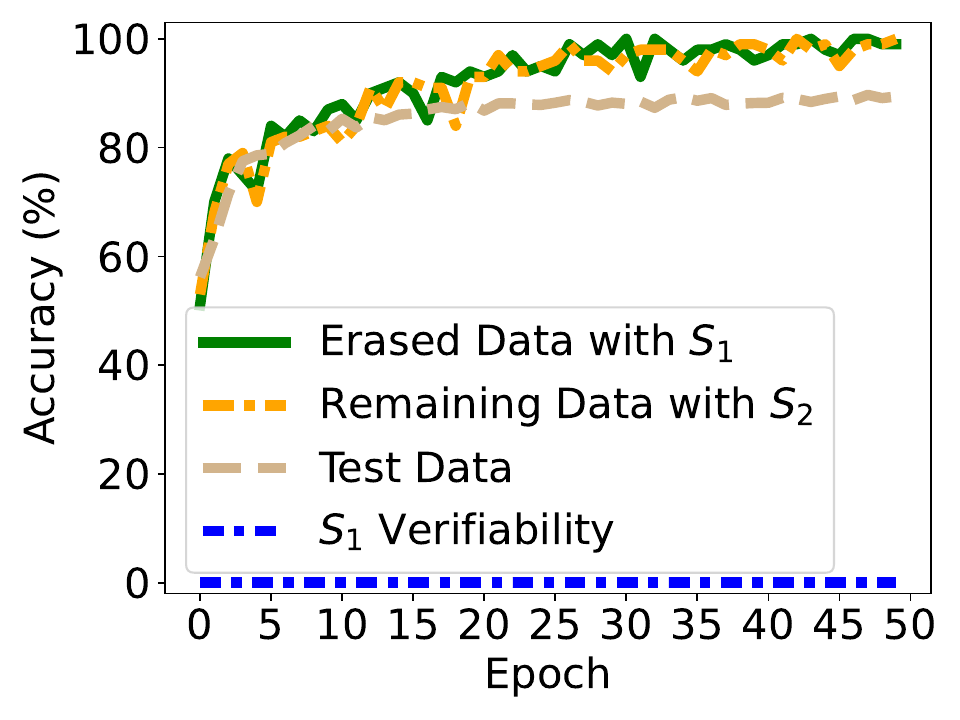}
 	}
 	\subfloat[\footnotesize MIB, VBU, user's whole data]{  	\label{fig_mibvbuwholedatacifar10}
 		\includegraphics[scale=0.25]{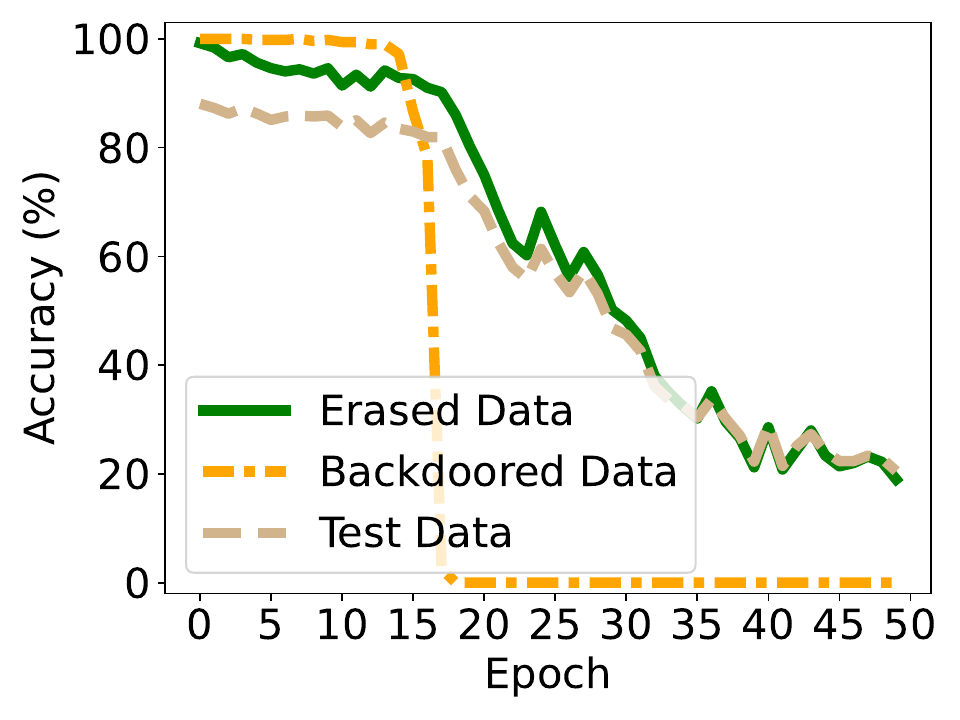}
 	}
 	\subfloat[\footnotesize SMS, VBU, user's whole data]{  	\label{fig_vmuvbuwholedatacifar10}
 		\includegraphics[scale=0.25]{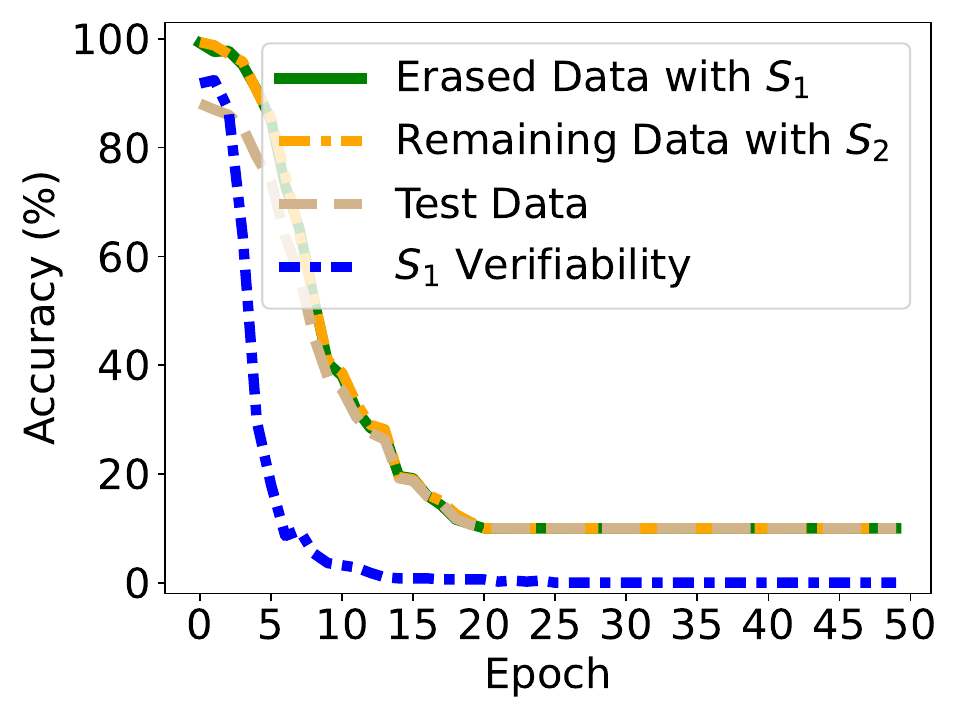}
 	}
 	\\
 	\subfloat[\footnotesize MIB, SISA, erased samples]{	\label{fig_mibsisaeraseddatacifar10}
 		\includegraphics[scale=0.25]{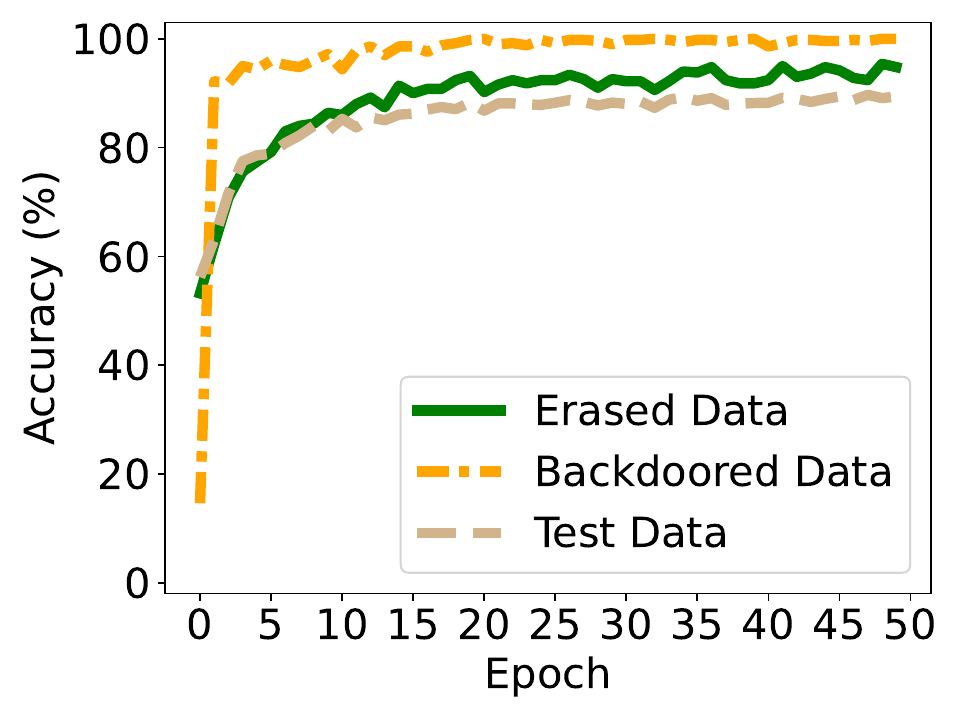}
 	}
 	\subfloat[\footnotesize SMS, SISA, erased samples]{ 	\label{fig_vmusisaeraseddatacifar10}
 		\includegraphics[scale=0.25]{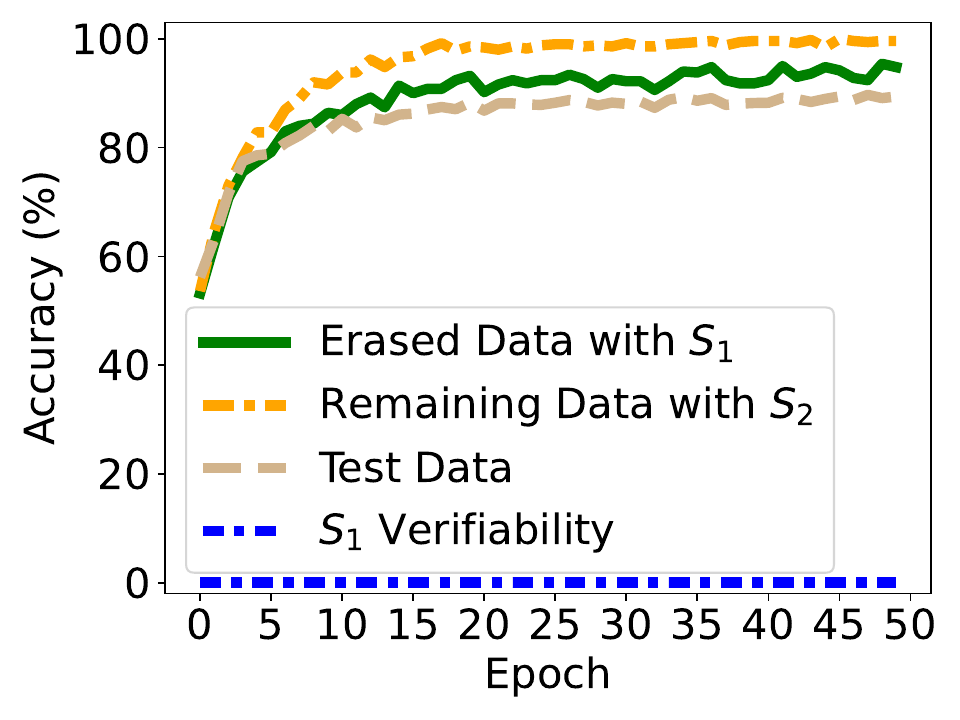}
 	}
 	\subfloat[\footnotesize MIB, VBU, erased samples]{		\label{fig_mibvbueraseddatacifar10}
 		\includegraphics[scale=0.25]{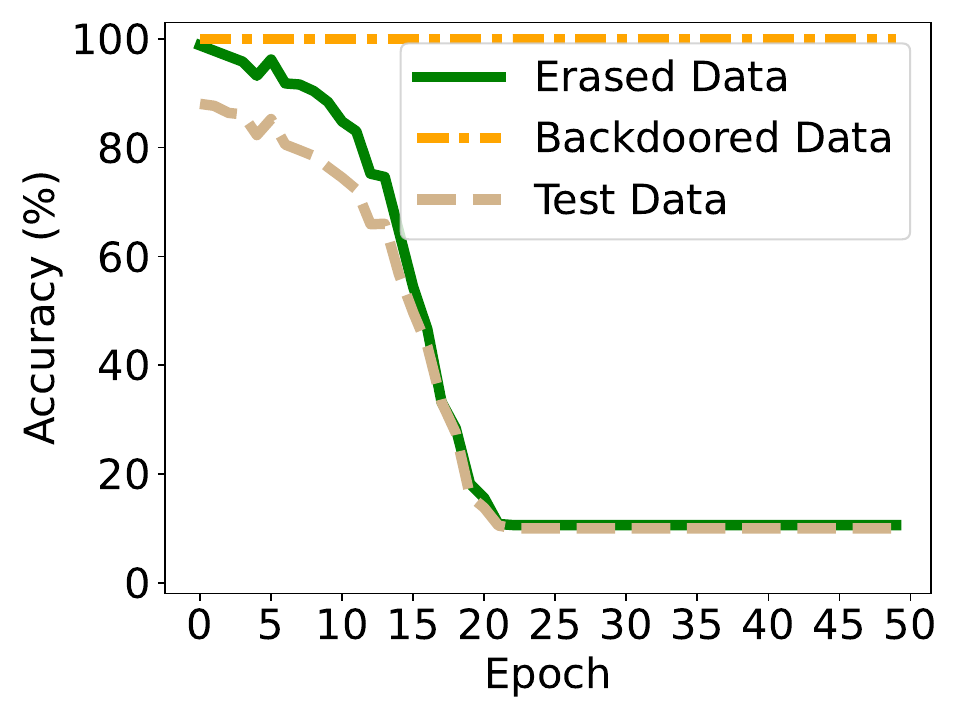}
 	}
 	\subfloat[\footnotesize SMS, VBU, erased samples]{	\label{fig_vmuvbueraseddatacifar10}
 		\includegraphics[scale=0.25]{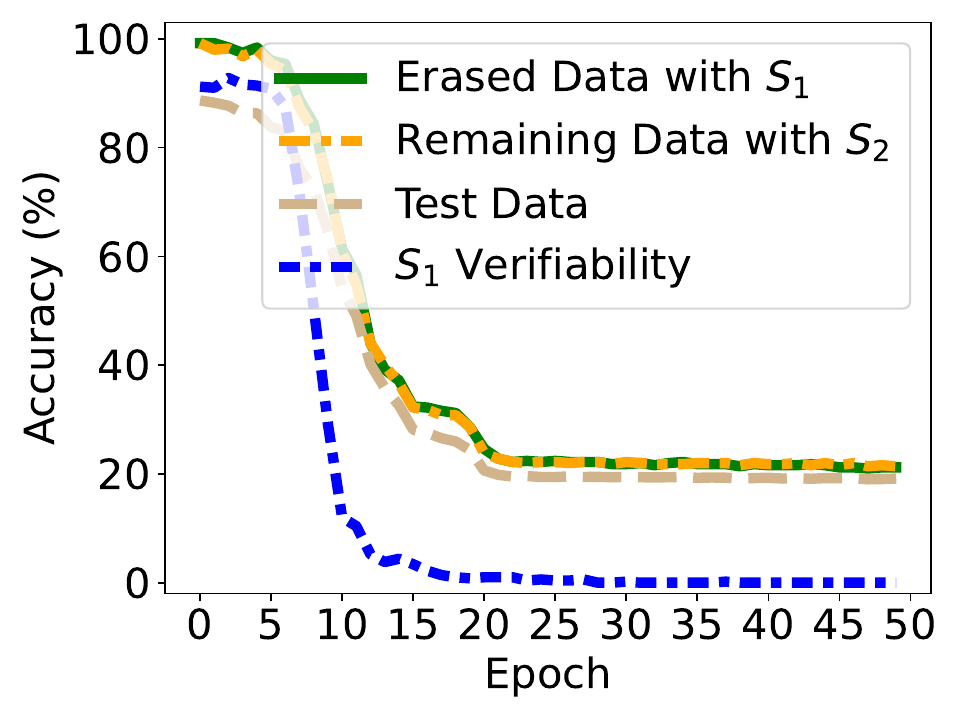}
 	}
 	\caption{Evaluations of unlearning verification performance of SMS and MIB in different unlearning scenarios using different unlearning methods. $S_1$ and $S_2$ are two kinds of seeds added to user-specified samples, $S_1$ for the erased samples and $S_2$ for other samples of his other data.} 
 	\label{evaluation_of_unlearning_ver_p} 
 \end{figure*}

\begin{table}[t]
	\scriptsize
	\caption{Impact of the ${\it SER}$ ($v$) on CIFAR10}
	\label{tab_watermark_p}
	\resizebox{\linewidth}{!}{
		\setlength\tabcolsep{2.5pt}
		\begin{tabular}{cccccc}
			\toprule
			\multirow{2}{*} { \makecell[c]{\textbf{Evaluation} \textbf{Metrics} } } & ${\it SER}$ = 0.2  & ${\it SER}$ = 0.4   & ${\it SER}$ = 0.6  & ${\it SER}$ = 0.8 & ${\it SER}$ = 1   \\
			\cmidrule(r){2-6} 		
			& \adjustbox{valign=b}{
				\includegraphics[scale=0.11]{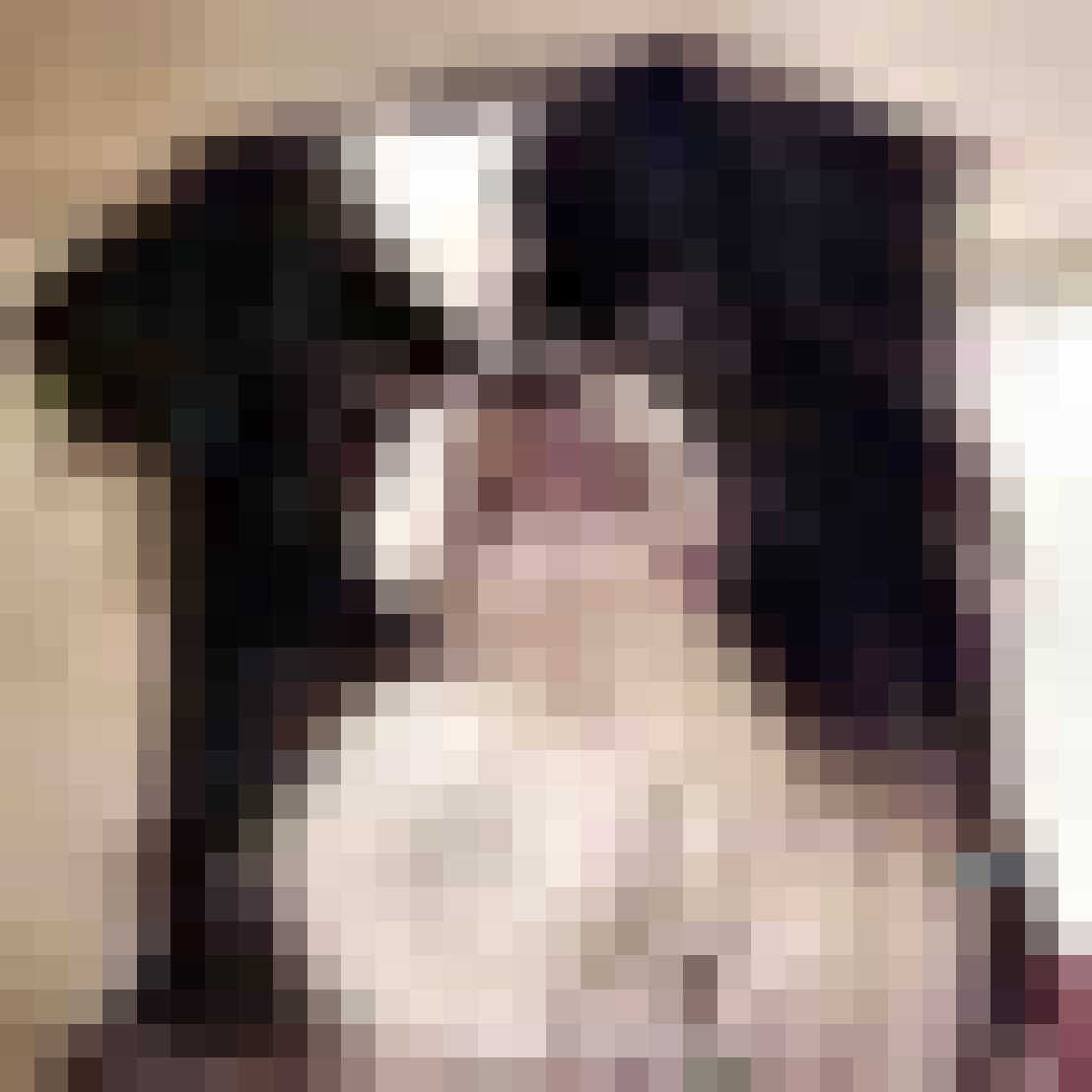} 
			}
			& \adjustbox{valign=b}{
				\includegraphics[scale=0.11]{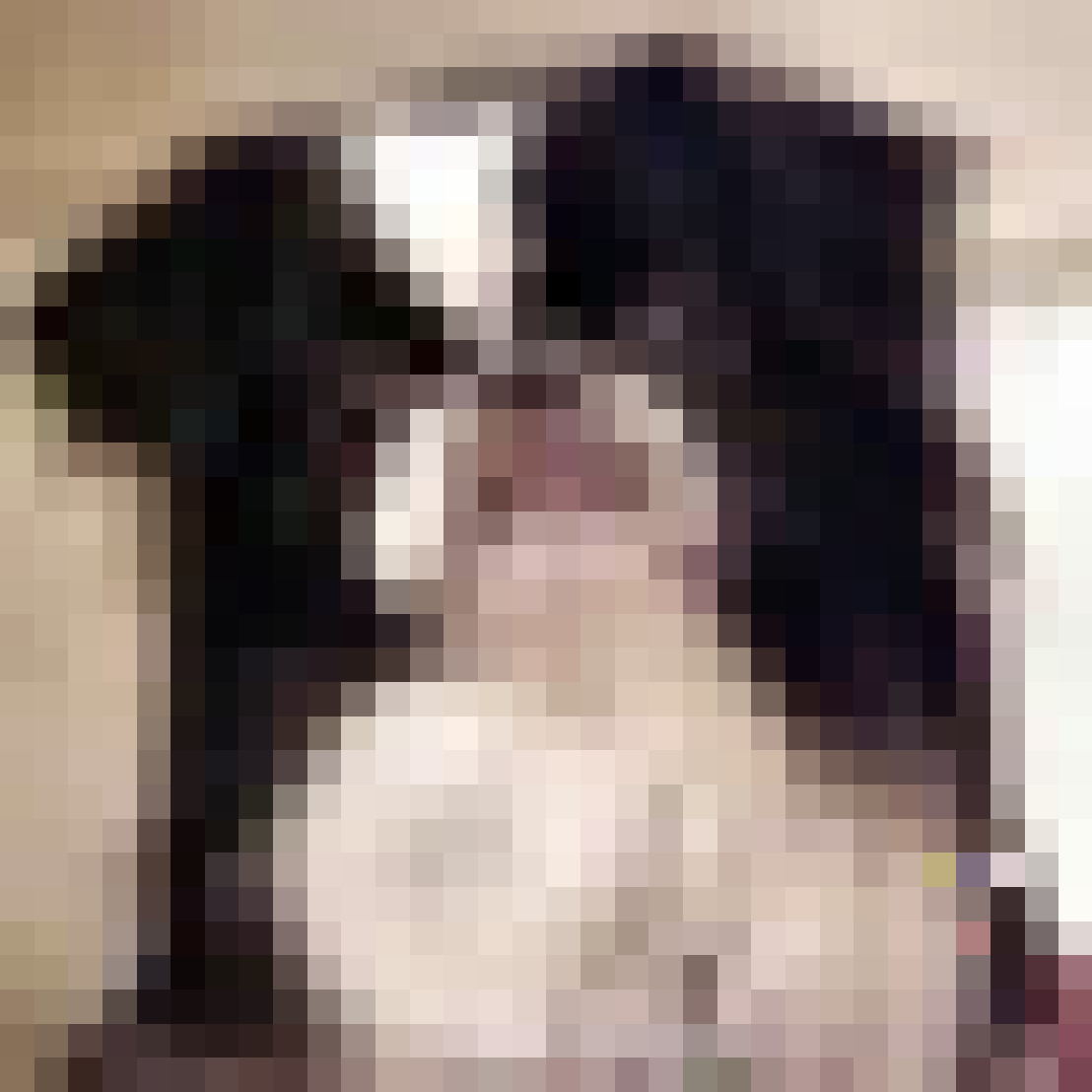} 
			}
			& \adjustbox{valign=b}{
				\includegraphics[scale=0.11]{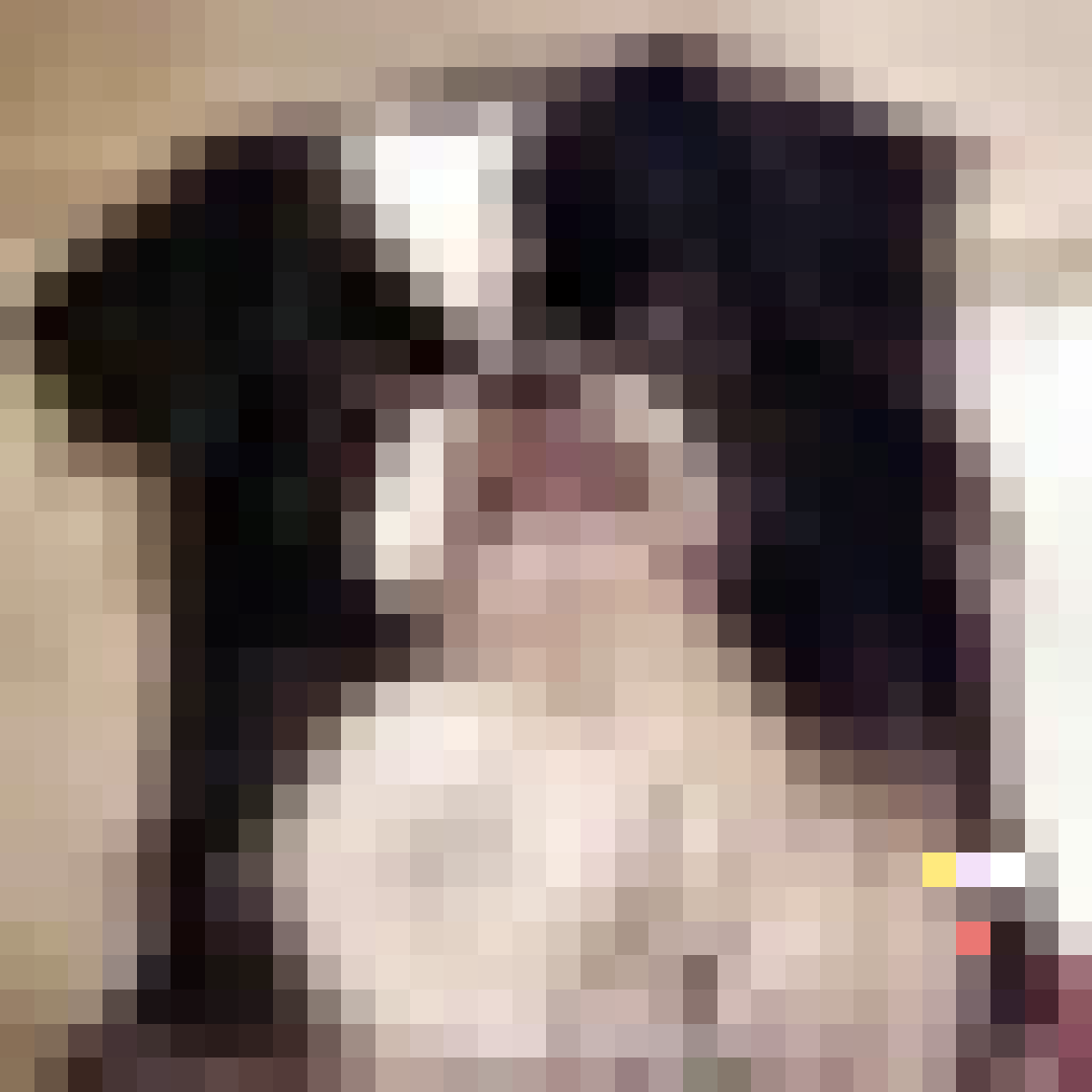} 
			}	 
			&\adjustbox{valign=b}{
				\includegraphics[scale=0.11]{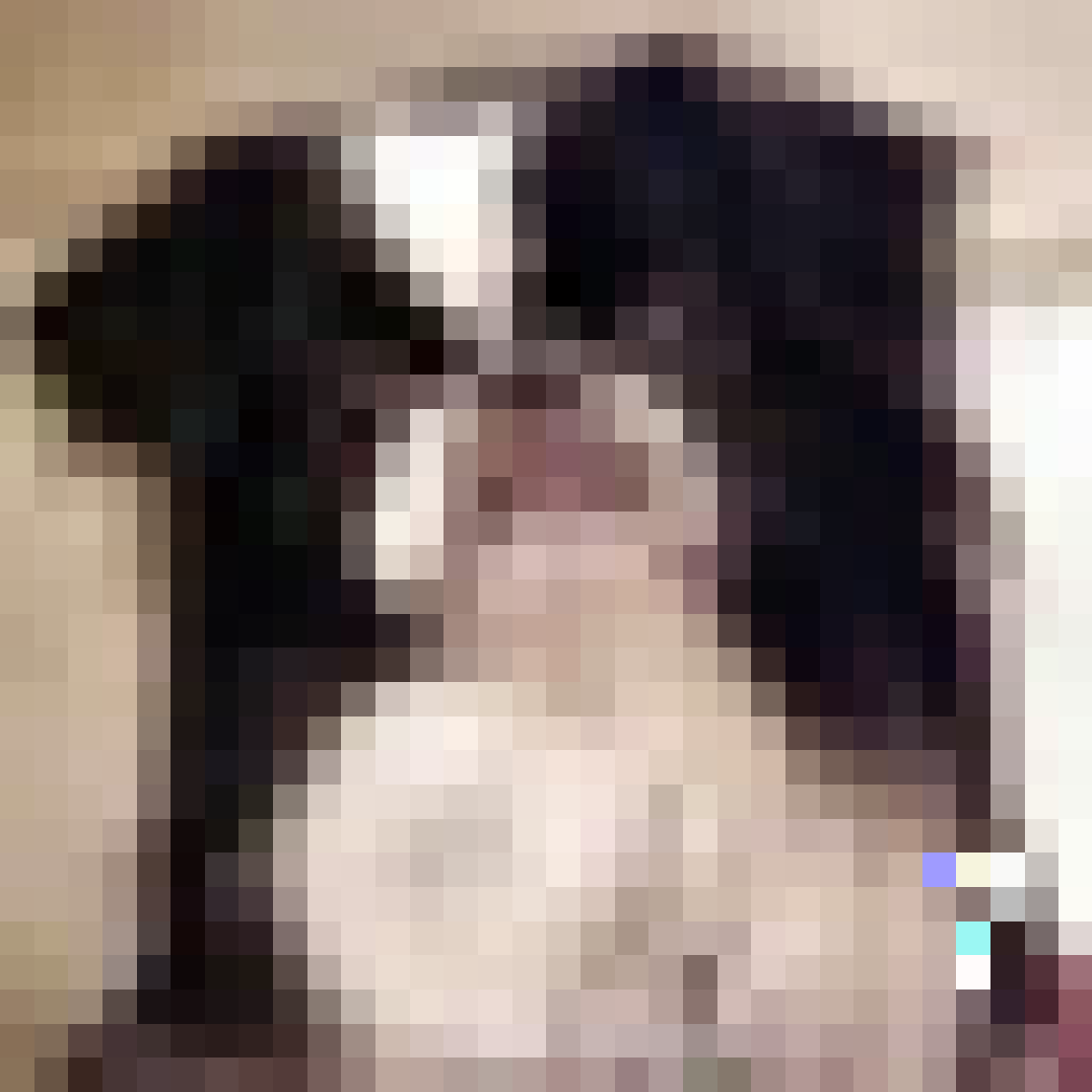} 
			}
			& \adjustbox{valign=b}{
				\includegraphics[scale=0.11]{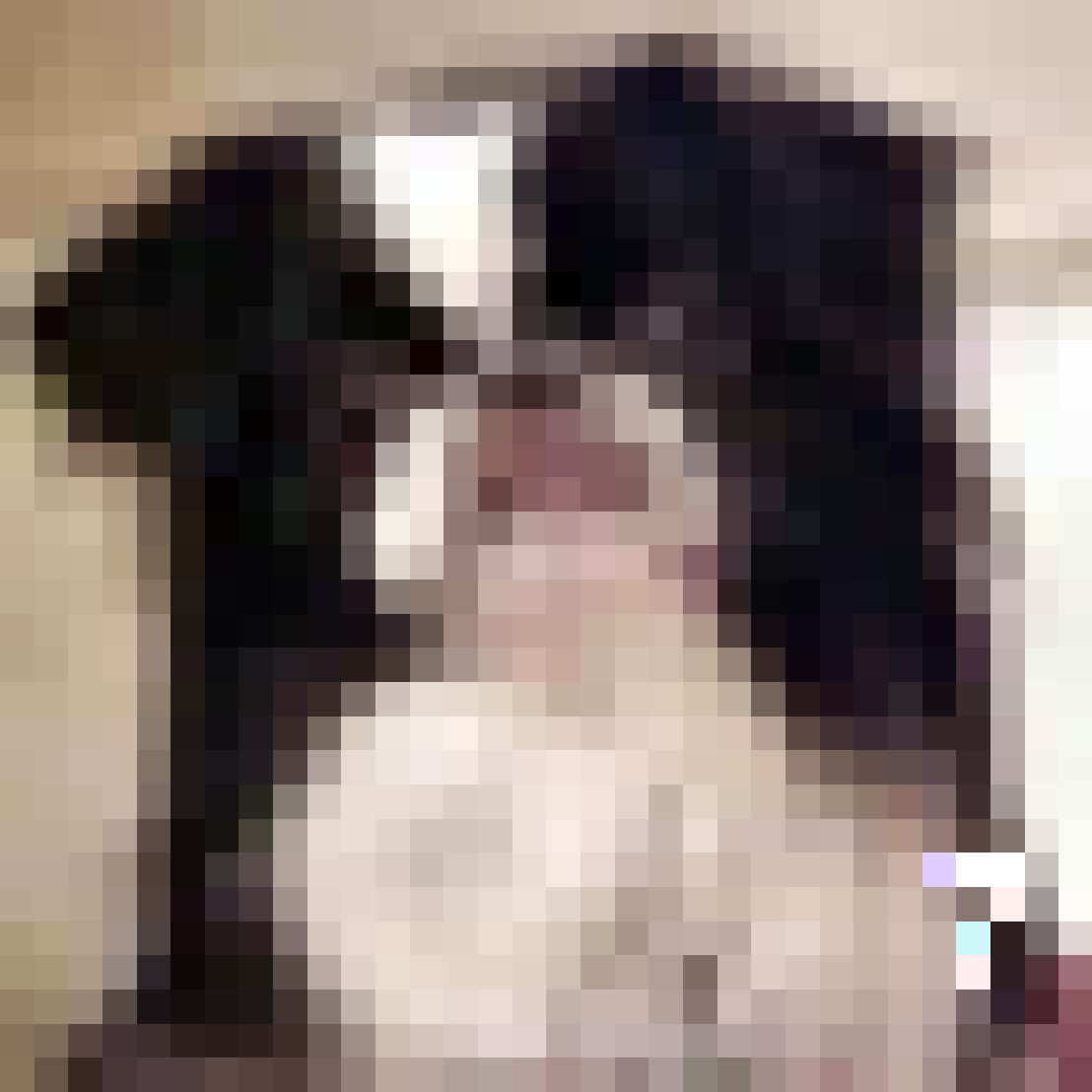} 
			} 	 \\
			\midrule
			Model acc. (MIB)        & 88.63\%      	 &88.76\%	   & 88.69\% & 88.97\%                  & 89.24\% 	     \\
			Verifiability (MIB)        & 97.67\%      	 &98.02\%	   & 99.78\% & 100\%                & 100\%	     \\
			Unambiguity (MIB)		& 98.43\%     	 &98.46\%	   &98.54\%  & 98.39\%                & 98.67\% 	     \\
			Model acc. (SMS)        & 89.62\%      	 &89.55\%	   & 89.85\% & 89.48\%                 & 89.57\% 	     \\
			Verifiability (SMS)        & 65.67\%      	 &86.33\%	   & 85.00\%  & 92.67\%                 & 97.67\%	     \\
			Unambiguity (SMS)		& 87.00\%      	 &91.67\%	   & 97.33\% & 98.00\%               & 96.67\%	     \\
			\bottomrule
	\end{tabular}}
	\vspace{-1mm}
\end{table}

We first analyze the results of MIB~\cite{hu2022membership}. Generally, a higher number of backdoored samples (larger $\it{SSR}$) degrades the model's utility for the primary task. This experiment further demonstrates that a lower embedding rate, $\it{SER}$, also negatively impacts model utility. When $\it{SER}=0.2$, the digit 7 is almost invisible, and the model accuracy degrades to $88.63\%$. This occurs because, with fewer backdoor information injections, the backdoored samples become more similar to genuine samples. Two highly similar samples with different labels create contradictions during model learning. Despite this degradation in model utility, MIB achieves good performance in terms of both verifiability and unambiguity on CIFAR10.

When analyzing the impact of $\it{SER}$ of SMS, seeding the model from seed-embedded samples is more challenging than directly backdooring. We need the self-supervised model seeding task to embed the whole sample into the model's latent space. Learning more feature information further enhances the primary task learning, resulting in better model accuracy. Consequently, SMS consistently achieves higher model accuracy compared to MIB, regardless of the $\it{SER}$. However, since seed embedding is more complex than backdooring, the verifiability and unambiguity are suboptimal when $\it{SER}=0.2$. These metrics improve significantly when $\it{SER} \geq 0.4$.

\subsection{Verifications for Different Unlearning Methods} \label{verify_of_diff_unl}

\begin{table}[t]
	\scriptsize
	\caption{Comparison of SMS and MIB in supporting the verification of different unlearning methods}
	\label{tab_verify_unl}
	\resizebox{\linewidth}{!}{
		\setlength\tabcolsep{2.5pt}
		\begin{tabular}{c||cccc}
			\toprule
			\multirow{2}{*} {Evaluation Metrics } & \multirow{2}{*} { Naive Retraining}  & \multirow{2}{*} { SISA \cite{bourtoule2021machine} }  & \multirow{2}{*} { HBU \cite{guo2019certified}}		  		 & \multirow{2}{*} { VBU \cite{nguyen2020variational}	}    \\
			\\
			\midrule
			MIB (User's whole samples)        & $\checkmark$      	 & $\checkmark$	   & $\times$   & $\times$                     \\
			MIB (Several erased samples)    & $\times$   & $\times$   & $\times$        & $\times$       \\
			SMS (User's whole samples)       &  $\checkmark$  &  $\checkmark$  &  $\checkmark$        &  $\checkmark$      \\
			SMS (Several erased samples)  &  $\checkmark$  &  $\checkmark$  &  $\checkmark$       &  $\checkmark$      \\
			\bottomrule
	\end{tabular}}
\end{table}

In this section, we evaluate the performances of SMS and MIB in supporting the verification of different unlearning methods. We first examine the verification capabilities of the two methods across various unlearning scenarios, as detailed in \Cref{tab_verify_unl}. This table includes prevalent unlearning methods along with two common types of unlearning requests.

MIB can only support retraining-based methods for unlearning the user's entire dataset because it must ensure that all backdoored samples are removed. For commonly specified genuine sample erasure, MIB is ineffective because the backdoored samples are distinct from the user's genuine data. {Since backdoored samples and genuine samples constitute separate sub-datasets, the loss on backdoored samples decreases much faster than on genuine samples in approximate unlearning, as shown in \Cref{fig_mnistepochaccdrop}.} Hence, it is difficult to ensure that the disappearance of backdoors indicates the unlearning of genuine samples. In contrast, SMS can support all unlearning scenarios because it integrates the seed as a feature of the genuine samples. The seed-embedded data is not separated from genuine samples and functions as genuine samples to serve the primary task. {When these seeds disappear, it indicates that the features of the genuine samples have been unlearned, which is also indicated in \Cref{fig_mnistepochaccdropwithwatermardis}. }

In \Cref{evaluation_of_unlearning_ver_p}, we present detailed results by showing the changes in primary task accuracy, backdoor verification, and seed verification during the unlearning training process. We conduct experiments using one of the most popular retraining-based methods, SISA~\cite{bourtoule2021machine}, and one of the most popular approximate unlearning methods, VBU~\cite{nguyen2020variational}, on CIFAR10. These results clearly demonstrate that MIB cannot support requests for unlearning genuine specified samples, as shown in \Cref{fig_mibsisaeraseddatacifar10,fig_mibvbueraseddatacifar10}, where the backdoor verification accuracy remains at $100\%$ even after the unlearning methods are executed.

\begin{figure}[t]
	\centering
	\subfloat{ 	\label{fig_mnistepochaccdrop}
		\includegraphics[scale=0.2615]{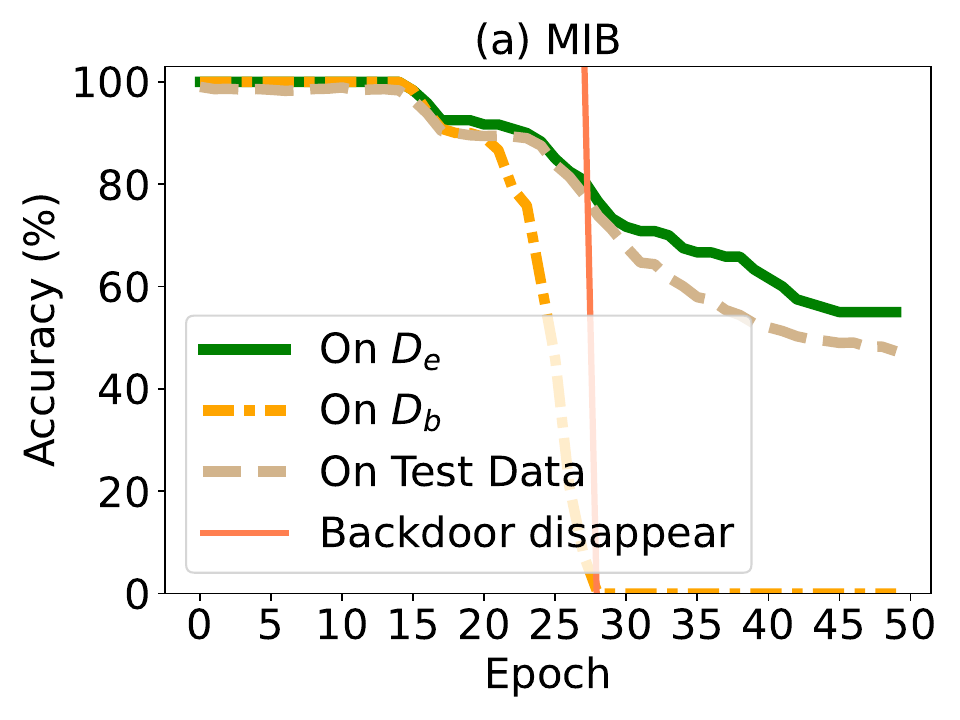}
	}
	\subfloat{ 	\label{fig_mnistepochaccdropwithwatermardis}
		\includegraphics[scale=0.2615]{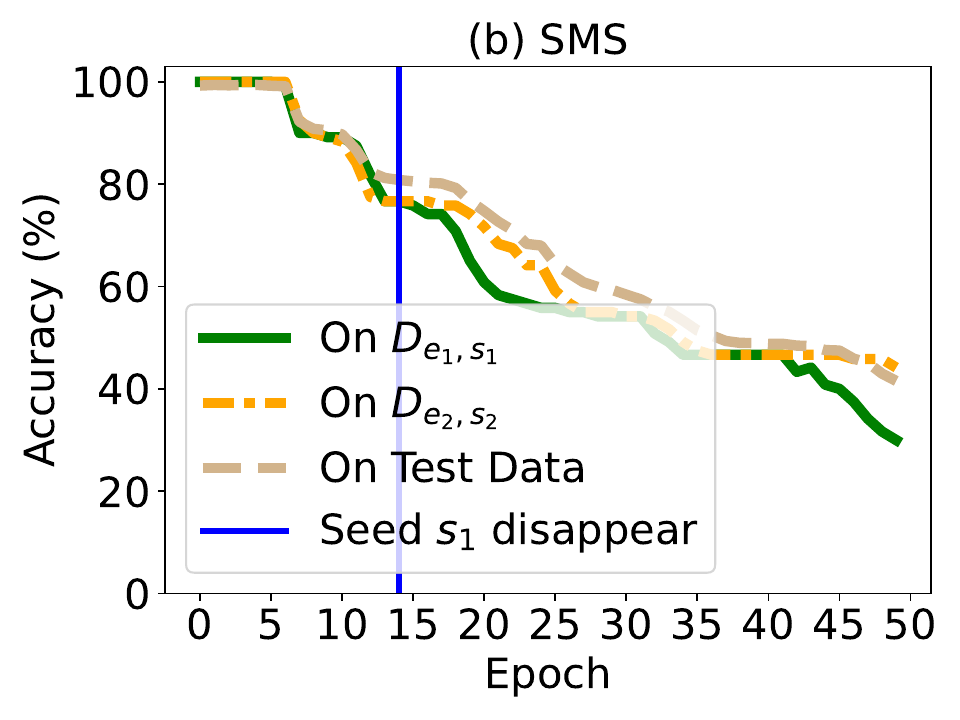} 
	}
\vspace{-2mm}
	\caption{The accuracy changes of SMS and MIB when executing VBU~\cite{nguyen2020variational} on MNIST. (a) The model accuracy diminishes rapidly on backdoored data $D_b$ while it decreases more slowly on erased data $D_e$ and test data. Since $D_e$ and $D_b$ are two independent datasets, the disappearance of the backdoored data $D_b$ does not indicate that the erased data $D_e$ has been unlearned. (b) Seeds $s_1$ and $s_2$ are embedded into the specified erased data $D_{e_1}$ and $D_{e_2}$, respectively. Both $D_{e_1,s_1}$ and $D_{e_2,s_2}$ are genuine data used for primary task training, and hence, they perform similarly during unlearning. The disappearance of seed $s_1$ indicates that the feature information of $D_{e_1,s_1}$ has been unlearned, as $s_1$ is embedded as a minor and invisible feature of $D_{e_1,s_1}$.
	\vspace{-2mm}
} 
	\label{unlearning_opt_process} 
\end{figure}

For SMS, the seed verification accuracy drops to zero in all unlearning methods and scenarios, as shown in \Cref{fig_vmusisawholedatacifar10,fig_vmuvbuwholedatacifar10,fig_vmusisaeraseddatacifar10,fig_vmuvbueraseddatacifar10}. 
In contrast to MIB's performance on approximate unlearning, where the backdoored data is distinct from both the erased data and the remaining data, depicted in \Cref{fig_mibvbuwholedatacifar10,fig_mibvbueraseddatacifar10}, SMS's performance on seed-embedded data aligns with its performance on the remaining data. This consistency is because SMS integrates the seed as a feature within the erased samples. Thus, the disappearance of the seed indicates the removal of the erased samples' features. Specifically, in \Cref{fig_mibvbuwholedatacifar10,fig_mibvbueraseddatacifar10}, the orange-dotted and green lines for MIB represent changes in two datasets: the backdoored data and the genuine erased data. In contrast, for SMS, in \Cref{fig_vmusisawholedatacifar10,fig_vmuvbuwholedatacifar10,fig_vmusisaeraseddatacifar10,fig_vmuvbueraseddatacifar10}, the blue-dotted and green lines indicate verifiability and primary task accuracy for the same dataset, the erased data.


Combined with our experimental results in \Cref{unlearning_opt_process} on MNIST and \Cref{evaluation_of_unlearning_ver_p} on CIFAR-10, we can claim that SMS efficiently supports sample-level genuine data unlearning verification, in both exact and approximate unlearning. However, MIB~\cite{hu2022membership} can only support the user-level exact unlearning verification.

\subsection{Optimization Process of SMS and MIB} \label{opt_process}

In this section, we present the detailed training processes of MIB and SMS to illustrate how these different schemes operate and how they perform on the primary task. The corresponding loss changes of MIB and SMS on MNIST and CIFAR10 are shown in \Cref{training_opt_process}. We use the mean squared error (MSE) loss for model seeding through self-supervised learning. The loss is averaged for each sample, making it slightly smaller than the cross-entropy loss of the primary task.

The key conclusion from these results is that the model seeding joint training method aids in the convergence of the primary task loss. In contrast, backdooring training hinders the primary task loss convergence, causing significant fluctuations, as reflected in the blue and orange dotted lines in \Cref{training_opt_process}. The underlying reason is that SMS's self-supervised model seeding enables the model to learn more feature information of the input, which can further enhance primary task performance. Conversely, MIB changes the label of the original sample, making samples similar to genuine samples but with different labels. When two similar samples have different labels, they create contradictions, hindering the learning of the primary task. The results shown in \Cref{fig_mnistepochlosswhentrianing,fig_cifar10epochlosswhentrianing} confirm our analysis.

\begin{figure}[t]
	\centering
	\subfloat{ 	\label{fig_mnistepochlosswhentrianing} \hspace{-3mm}
		\includegraphics[scale=0.26]{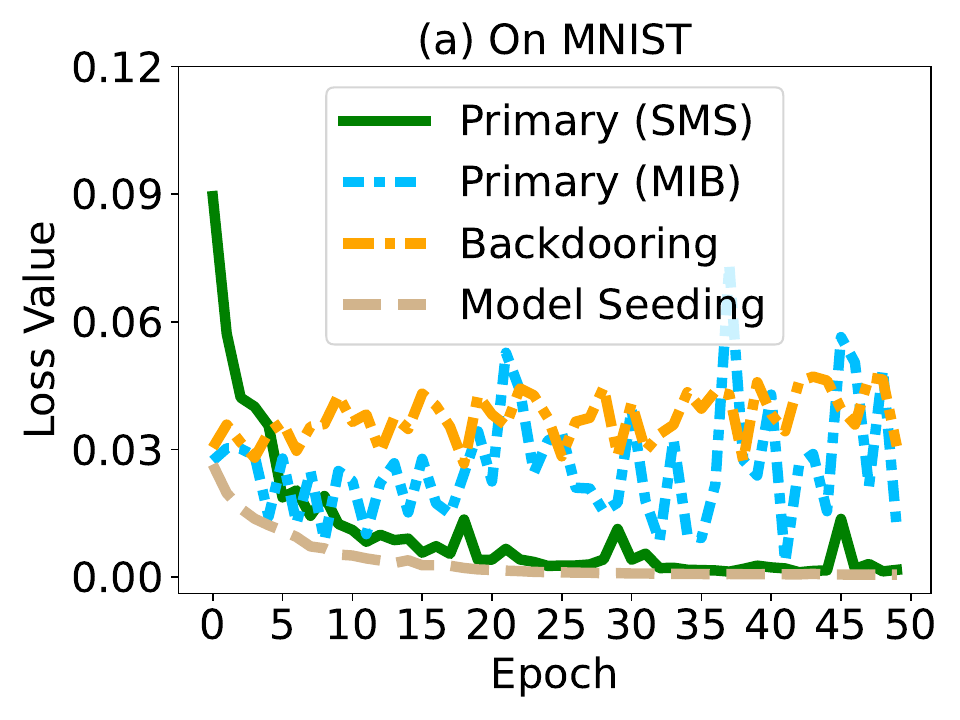}  }
	\subfloat{ 	\label{fig_cifar10epochlosswhentrianing}
		\includegraphics[scale=0.26]{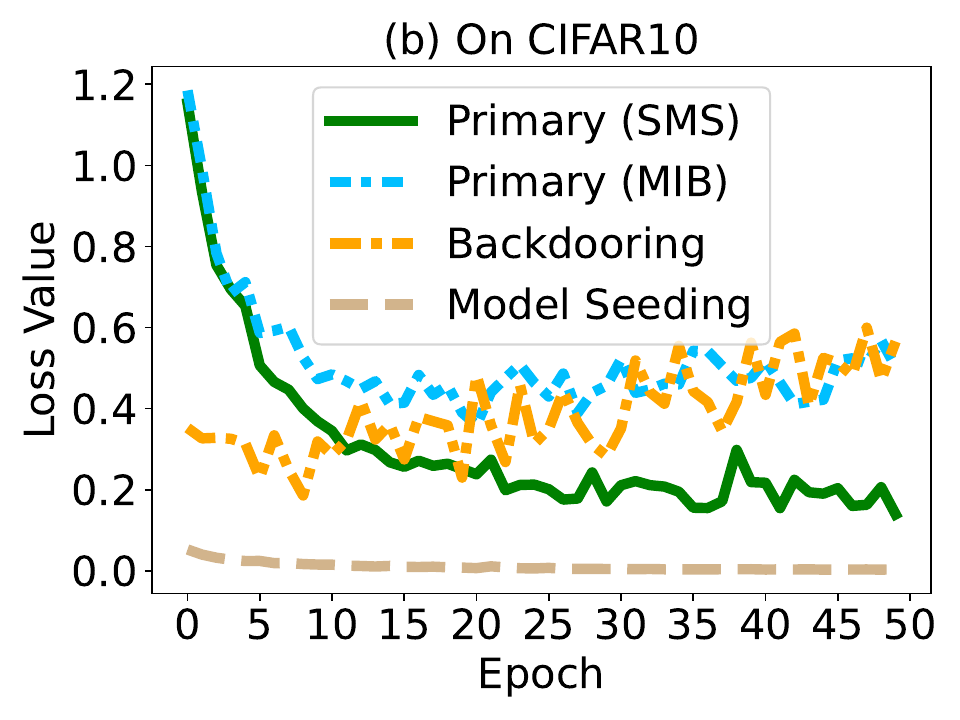}  }
	\vspace{-2mm}
	\caption{The loss changes during the model training process of MIB (backdooring loss) and SMS (primary and self-supervised model seeding loss) } 
	\label{training_opt_process} 
\end{figure}

\section{Further Discussion}

From previous experiments and analysis, although the SMS method provides a framework for unlearning verification for benign samples, the seed embedding introduces more computational costs than backdoor-based unlearning methods \cite{hu2022membership}. As shown in \Cref{tab_total}, the extra computational costs rise by roughly 10–20\%. These costs are acceptable for one‑off model releases but can be prohibitive for streaming or continually retrained services. Future work could amortise the cost by re‑using early‑layer parameters between the decoder and the encoder \cite{pham2018efficient,wallingford2022task}.

Moreover, \citeauthor{zhang2024verification} \cite{zhang2024verification} demonstrate that an untrusted server can mount targeted adversarial attacks that erase the backdoor or seed sub‑space while leaving the rest of the user sample intact, thereby fooling any single‑signal verifier. Unlike backdoor triggers and modified labels, seeds of SMS will not influence the main task and perform like benign features. Hence, simulating the adversarial unlearning for seed-embedded data will ensure a similar effect as unlearning. Techniques like steganography \cite{baluja2019hiding,wu2014steganography} and texture embedding \cite{oechsle2019texture} may strengthen the linkage of seeds and normal feature and improve the concealment of seeds.

\section{Summary and Future Work} \label{summary}

We propose an SMS scheme to verify if users' data has been deleted after executing unlearning processes. To ensure data removal verification effectiveness, seeds must be integrated as inherent features of the unlearned data. We create a self-supervised model seeding task to learn the seed from the samples and propose a model seeding joint training structure to optimize both the primary and self-supervised model seeding tasks simultaneously. After embedding the seeded data, we train a verifying model to identify the seed information from the output of the self-supervised model seeding task. 
Based on extensive experiments, we demonstrate the superiority of SMS in supporting genuine data unlearning verification in both exact and approximate unlearning scenarios. 


As machine unlearning becomes increasingly important, unlearning verification is essential but remains under-explored. The SMS method proposed in this paper serves as a crucial step toward constructing effective unlearning verification for genuine samples in both exact and approximate unlearning scenarios, a goal difficult to achieve with existing backdoor-based verification methods. {However, the SMS also introduces additional computational costs at the same time and may face the potential adversarial threats. Future research could follow this way to develop more efficient and robust unlearning verification approaches.}





\ifCLASSOPTIONcaptionsoff
\newpage
\fi



%


\footnotesize
\bibliographystyle{IEEEtranN}
\bibliography{UAL}

%
\begin{IEEEbiography}[{\includegraphics[width=1in,height=1.25in,clip,keepaspectratio]{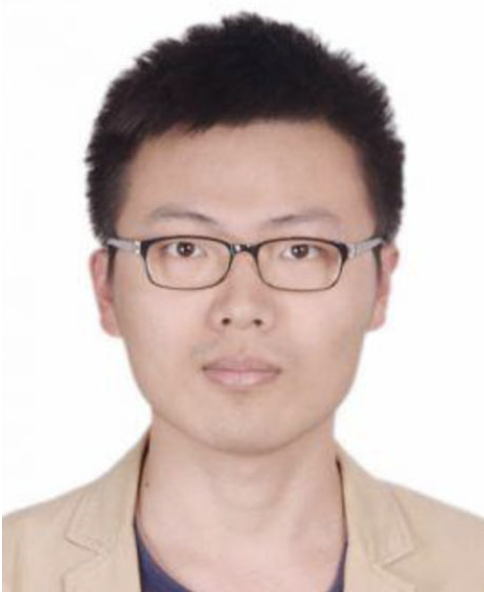}}]{Weiqi Wang}(IEEE M'24)
	received his Ph.D. degree from University of Technology Sydney, Australia, in 2024. He currently is a postdoctoral research associate of University of Technology Sydney, Australia, advised by Prof. Shui Yu. He previously worked as a senior algorithm engineer at the Department of AI-Strategy, Local consumer services segment, Alibaba Group. He has been actively involved in the research community by serving as a reviewer for prestige journals such as ACM Computing Surveys, IEEE Communications Surveys and Tutorials, IEEE TIFS, IEEE TDSC, IEEE TIP, IEEE TMC, IEEE Transactions on SMC, and IEEE IOTJ, and international conferences such as The ACM Web Conference (WWW), ICLR, IEEE ICC, and IEEE GLOBECOM. His research interests are the security and privacy in machine learning.
\end{IEEEbiography}


\begin{IEEEbiography}[{\includegraphics[width=1in,height=1.25in,clip,keepaspectratio]{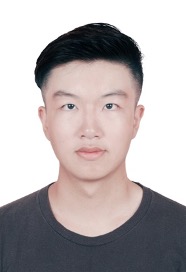}}]{Chenhan Zhang} (IEEE S'19 M'24) obtained his Ph.D. from University of Technology Sydney, Australia, in 2024, where he was advised by Prof. Shui Yu. Before that, he obtained his B.Eng. (Honours) from University of Wollongong, Australia, and  M.S. from City University of Hong Kong, Hong Kong, in 2017 and 2019, respectively. He is currently a postdoctoral research fellow at University of Technology Sydney. His research interests include security and privacy in graph neural networks and trustworthy spatiotemporal cyber physical systems. He has been actively involved in the research community by serving as a reviewer for prestige venues such as ICLR, IJCAI, INFOCOM, IEEE TDSC, IEEE IoTJ, ACM Computing Survey, and IEEE Communications Surveys and Tutorials.
\end{IEEEbiography}

\begin{IEEEbiography}[{\includegraphics[width=1in,height=1.25in,clip,keepaspectratio]{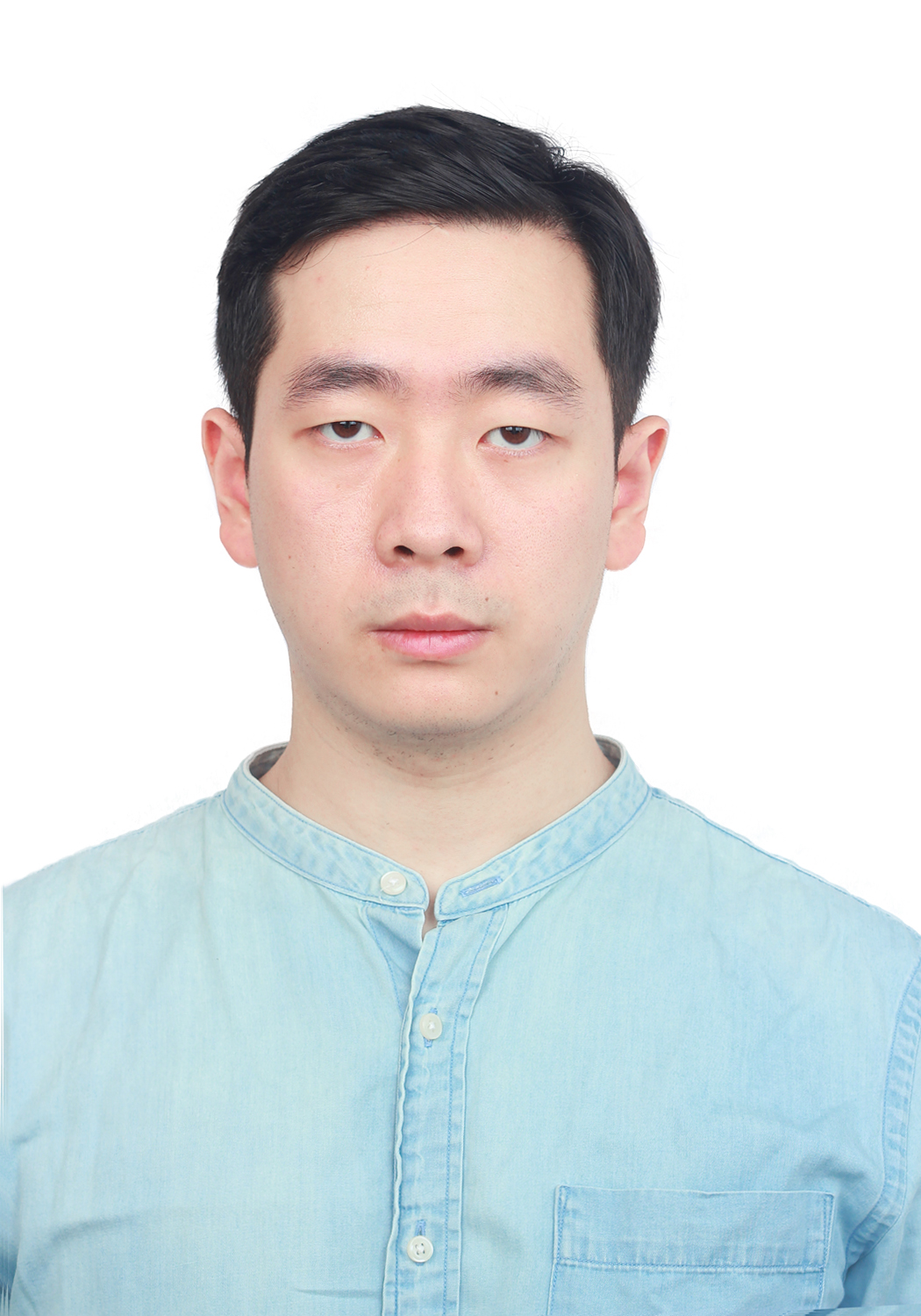}}]{Zhiyi Tian} (IEEE M'24) received the B.S. degree and the M.S. degree from Sichuan University, China, in 2017 and 2020, respectively. He received the Ph.D. degree in 2024 from University of Technology Sydney, Australia. He currently is a research associate of University of Technology Sydney, Australia. His research interests include security and privacy in deep learning, semantic communications. He has been actively involved in the research community by serving as a reviewer for prestige journals, such as ACM Computing Surveys, IEEE Communications Surveys and Tutorials, TIFS, TKDD, and international conferences, such as IEEE ICC and IEEE GLOBECOM.
\end{IEEEbiography}

\begin{IEEEbiography}[{\includegraphics[width=1in,height=1.25in,clip,keepaspectratio]{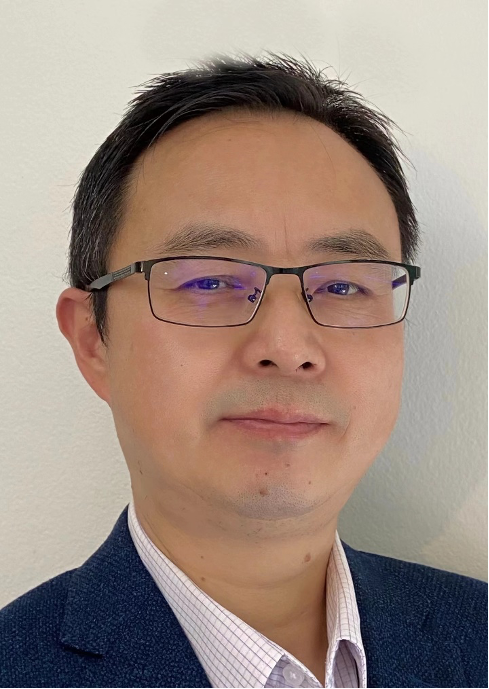}}]{Shui Yu} (IEEE F'23) obtained his PhD from Deakin University, Australia, in 2004. He is a Professor of School of Computer Science, Deputy Chair of University Research Committee, University of Technology Sydney, Australia. His research interest includes Mathematical AI, Cybersecurity, Network Science, and Big Data. He has published seven monographs and edited two books, more than 650 technical papers at different venues. His current h-index is 86. Professor Yu promoted the research field of networking for big data since 2013, and his research outputs have been widely adopted by industrial systems, such as Amazon cloud security. He is currently serving the editorial boards of IEEE Communications Surveys and Tutorials (Area Editor), IEEE Transactions on Cognitive Communications and Networking, and IEEE Transactions on Dependable and Secure Computing. He is a Distinguished Visitor of IEEE Computer Society, and an elected member of the Board of Governors of IEEE Communications Society. He is a member of ACM and AAAS, and a Fellow of IEEE.
\end{IEEEbiography}

\end{document}